\def\eqref#1{equation~\ref{#1}}
\def\1{\bm{1}}
\DeclareMathAlphabet{\mathsfit}{\encodingdefault}{\sfdefault}{m}{sl}
\SetMathAlphabet{\mathsfit}{bold}{\encodingdefault}{\sfdefault}{bx}{n}
\DeclareMathOperator*{\argmin}{arg\,min}
\tikzset{cross/.style={cross out, draw, 
         minimum size=2*(#1-\pgflinewidth), 
         inner sep=0pt, outer sep=0pt}}
\newcommand{\state}[0]{x}
\newcommand{\norm}[1]{\left\lVert#1\right\rVert}
\newcommand{\vertiii}[1]{{\left\vert\kern-0.25ex\left\vert\kern-0.25ex\left\vert #1 
    \right\vert\kern-0.25ex\right\vert\kern-0.25ex\right\vert}}
\newmdtheoremenv[linecolor=black, backgroundcolor=lightgray!15, innertopmargin=5pt, innerbottommargin=5pt, skipabove=10pt, skipbelow=10pt]{theorem}{\textbf{Theorem}}
\newmdtheoremenv[linecolor=white, backgroundcolor=lightgray!15, innertopmargin=5pt, innerbottommargin=5pt, skipabove=10pt, skipbelow=10pt]{corollary}{\textbf{Corollary}}[theorem]
\newmdtheoremenv[linecolor=white, backgroundcolor=lightgray!15, innertopmargin=5pt, innerbottommargin=5pt, skipabove=10pt, skipbelow=10pt]{lemma}{\textbf{Lemma}}
\newmdtheoremenv[linecolor=white, backgroundcolor=lightgray!15, innertopmargin=5pt, innerbottommargin=5pt, skipabove=10pt, skipbelow=10pt]{problem}{\textbf{Problem}}
\newtheorem{definition}{\textbf{Definition}}
\newmdtheoremenv[linecolor=white, backgroundcolor=lightgray!15, innertopmargin=5pt, innerbottommargin=5pt, skipabove=10pt, skipbelow=10pt]{objective}{\textbf{Objective}}
\newmdenv[
    linecolor=white, backgroundcolor=lightgray!15, innertopmargin=5pt, innerbottommargin=5pt, skipabove=10pt, skipbelow=10pt
]{graybox}
\renewcommand{\fps@figure}{htp}
\renewcommand{\fps@table}{htp}
\title{V-OCBF: Learning Safety Filters from Offline Data via Value-Guided Offline Control Barrier Functions}
\author{\name Mumuksh Tayal$^*$ \email mumukshtayal@iisc.ac.in \\
      \addr Center for Cyber Physical Systems (CPS)\\
      Indian Institute of Science (IISc) Bengaluru, India
      \AND
      \name Manan Tayal$^*$ \email manantayal@microsoft.com \\
      \addr Microsoft Research, India
      \AND
      \name Aditya Singh \email singhadi@seas.upenn.edu \\
      \addr Department of Electrical and Systems Engineering\\
      University of Pennsylvania, Philadelphia, USA
      \AND
      \name Shishir Kolathaya \email shishirk@iisc.ac.in\\
      \addr Center for Cyber Physical Systems (CPS)\\
      Indian Institute of Science (IISc) Bengaluru, India
      \AND
      \name Ravi Prakash \email ravipr@iisc.ac.in\\
      \addr Center for Cyber Physical Systems (CPS)\\
      Indian Institute of Science (IISc) Bengaluru, India
      \AND
      $^*$ denotes equal first author contribution}
\begin{document}

\maketitle

\begin{abstract}
Ensuring safety in autonomous systems requires controllers that aim to satisfy state-wise constraints without relying on online interaction.While existing Safe Offline RL methods typically enforce soft expected-cost constraints, they struggle to ensure strict state-wise safety. Conversely, Control Barrier Functions (CBFs) offer a principled mechanism to enforce forward invariance, but often rely on expert-designed barrier functions or knowledge of the system dynamics. We introduce Value-Guided Offline Control Barrier Functions (V-OCBF), a framework that learns a neural CBF entirely from offline demonstrations. Unlike prior approaches, V-OCBF does not assume access to the dynamics model; instead, it derives a recursive finite-difference barrier update, enabling model-free learning of a barrier that propagates safety information over time. Moreover, V-OCBF incorporates an expectile-based objective that avoids querying the barrier on out-of-distribution actions and restricts updates to the dataset-supported action set. The learned barrier is then used with a Quadratic Program (QP) formulation to synthesize real-time safe control. Across multiple case studies, V-OCBF yields substantially fewer safety violations than baseline methods while maintaining strong task performance, highlighting its scalability for offline synthesis of safety-critical controllers without online interaction or hand-engineered barriers.
\end{abstract}

\section{Introduction}
\label{section: introduction}
Ensuring the safety of autonomous systems is essential for their reliable and widespread deployment. From household service robots to autonomous vehicles and aerial drones, these systems increasingly operate in complex and unstructured environments where unsafe behavior can lead to irreversible consequences. As autonomy becomes deeply integrated into transportation, manufacturing, and healthcare, guaranteeing that such systems operate within well-defined safety boundaries is critical for reliability, and long-term adoption.

Reinforcement learning (RL) has emerged as a powerful paradigm for enabling autonomous systems to acquire sophisticated control behaviors. However, in safety-critical domains, naïve RL exploration can be hazardous. Although constrained RL (CRL) \citep{10.5555/3305381.3305384, altman2021constrained, alshiekh2018safe, Zhao2023SafeRL} methods attempt to incorporate safety constraints during learning, they typically require extensive online interaction with the environment. However, most previous studies focus on online RL setting \citep{liu2024dsrl}, which suffers from serious safety issues in both training and deployment phases, especially for scenarios that lack high-fidelity simulators and require real system interaction for policy learning. As a result, there is a growing interest in synthesizing safe policies using offline RL or imitation learning \citep{3, Aviral2020CQL}. Nevertheless, most online and offline safe RL approaches \citep{xu2022constraints, ciftci2024safe, pmlr-v119-stooke20a} treat safety as a \textit{soft constraint} and regulate only the expected cumulative constraint violations. Such probabilistic constraints are insufficient for applications that demand strict state-wise safety, where even a single violation is unacceptable. Furthermore, jointly optimizing performance and safety from static datasets often leads to unstable training dynamics and overly conservative behavior, particularly when safety-critical transitions are sparsely represented\citep{lee2022coptidice}.

Control-theoretic tools provide an alternative and more rigorous foundation for safety. In particular, Control Barrier Functions (CBFs)~\citep{ames2014control} offer a principled mechanism to enforce instantaneous safety constraints by guaranteeing the forward invariance of a prescribed safe set. When combined with learning-based controllers, CBFs serve as minimally invasive safety filters that adjust nominal actions only when necessary to prevent constraint violations. Their integration with Quadratic Program (QP) based controllers enables real-time implementation with modern optimization solvers. Consequently, CBF-based controllers have been successfully applied to a wide range of safety-critical tasks, including adaptive cruise control~\citep{ames2014control}, aerial robotics~\citep{7525253,tayal2024control}, and legged locomotion~\citep{nguyen2015safety}. In all of these applications, the performance and safety guarantees fundamentally depend on the quality of the underlying CBF.

Constructing valid CBFs, however, is a challenging problem. Hand-crafting barrier functions requires deep system knowledge and does not scale well to high-dimensional or partially known dynamical systems. This has motivated significant interest in Neural Control Barrier Functions (NCBFs), which leverage the expressive power of neural networks to approximate complex safe sets. A variety of techniques have been proposed for learning NCBFs, including SMT-based synthesis~\citep{abate2021fossil,abate2020formal}, mixed-integer programming~\citep{zhao2022verifying}, nonlinear optimization~\citep{NEURIPS2023_120ed726}, and loss-based training methods~\citep{dawson2022safe,dawson2023safe,tayal2024learning, tayal2025physics}. Other recent approaches learn CBFs from value functions associated with nominal policies~\citep{so2024train}. However, most of these methods rely on online interaction to collect informative samples or refine the barrier, which is often infeasible in safety-critical settings.


Recent work has explored learning Control Barrier Functions (CBFs) from offline demonstrations \citep{Alexander2020NCBF, Fernando2023iDBF, tabbara2025CCBF}. Existing methods either fit CBFs only on expert trajectories or rely on data-likelihood measures to filter unsafe samples, which limits their ability to generalize beyond the demonstrated states. Uncertainty-aware approaches address distributional mismatch but often become overly conservative. Overall, current offline CBF learning methods are closely tied to the empirical data distribution and do not explicitly reason about future system evolution, resulting in conservative safety.

This paper introduces Value-Guided Offline Control Barrier Functions (V-OCBF), a novel framework designed to overcome key limitations of existing offline RL and CBF-based approaches. We derive a model-free finite-difference barrier recursion and establish that, in the idealized setting without learning or approximation errors, enforcing this update provides a formal one-step forward-invariance safety guarantee for control-affine systems. In addition, we propose an expectile-based learning objective that allows the synthesized safe policy to improve over the behavior policy in the dataset while never querying the barrier on out-of-distribution actions, ensuring stable and reliable offline learning. To summarise, the main contributions of this work are as follows:

\begin{enumerate}
\item We propose V-OCBF, a framework for learning safe controllers entirely from offline demonstrations.
\item We derive a model-free finite-difference barrier recursion and prove that adherence to this update guarantees one-step forward invariance for any control-affine system.
\item We introduce an expectile-based objective that improves upon the behavior policy without evaluating the barrier outside the dataset action support.
\item Across diverse systems, including high-dimensional Safety Gymnasium \citep{ji2023safety} tasks, V-OCBF consistently outperforms constrained offline RL and neural CBF baselines in both safety and reward.
\end{enumerate}

\section{Related Works}
\label{section: related_work}
\subsection{Safe Offline RL} 
Safe RL has traditionally been studied in the online setting, where most methods rely on Lagrangian-based constrained optimization \citep{chow2017risk,tessler2018reward,pmlr-v119-stooke20a}. These approaches regulate safety through cost penalties, treating constraints as  soft  and therefore allowing non-zero violation risk. CPO \citep{10.5555/3305381.3305384} provides theoretical guarantees during updates through a trust-region mechanism, but still cannot ensure strict safety throughout online learning. These limitations motivate shifting toward  safe offline RL , where policies are synthesized from static data to avoid unsafe exploration. Among early offline-safe RL methods, CPQ \citep{xu2022constraints} assigns large costs to unsafe or out-of-distribution actions, but this can distort the value function and reduce generalization \citep{li2023when}. COptiDICE \citep{lee2022coptidice}, building on DICE objectives \citep{lee2021optidice}, inherits the difficulties of residual-gradient learning \citep{baird1995residual}. More recent work integrates safety considerations into Decision Transformer \citep{chen2021decision} or Diffuser models \citep{janner2022planning}, enabling sequence-modeling-based safety \citep{liu2023constrained,lin2023safe,1}. However, these architectures are computationally expensive and challenging to scale. Overall, while offline RL mitigates unsafe online interaction, existing methods primarily enforce soft constraints and often struggle when unsafe transitions are underrepresented or missing from the dataset.

\subsection{Offline Neural CBFs}
Complementary to RL-based strategies, several recent methods aim to learn Control Barrier Functions (CBFs) directly from offline demonstrations. The approach in~\citep{Alexander2020NCBF} enforces differentiable barrier conditions on expert trajectories but does not account for out-of-distribution (OOD) states, limiting generalization beyond demonstrated data. The iDBF framework~\citep{Fernando2023iDBF} mitigates unsafe generalization by filtering low-likelihood states and actions using a Behavior Cloned policy, although this reliance on BC likelihood restricts coverage when demonstrations do not span the full safe set. Conservative CBFs~\citep{tabbara2025CCBF} incorporate uncertainty-aware penalties to avoid optimistic predictions on OOD states, but this often yields overly cautious certificates that substantially underestimate the true safe region. While these approaches represent important progress, they fundamentally rely on the empirical demonstration distribution and do not explicitly model how system dynamics may evolve toward unsafe states. Consequently, they tend to produce conservative CBFs that under-approximate the true safe set.


\subsection{Positioning of Our Work}
V-OCBF positions itself primarily as a framework for \textbf{learning safe control policies from offline datasets}, utilizing Control Barrier Functions (CBFs) as a learned intermediate representation to enforce strictly safe behavior. Unlike safe offline RL methods \citep{lee2022coptidice,liu2023constrained,lin2023safe,1}, which rely on soft constraints, we directly construct a safety certificate and a safe policy that guarantees one-step forward invariance, in the idealised setting. In contrast to model-based safe offline approaches that rely on learned dynamics both for training neural CBFs and during inference \citep{Alexander2020NCBF, Fernando2023iDBF, tabbara2025CCBF}, V-OCBF employs a model-free value recursion that models future safety evolution and uses dynamics only at the time of inference for local Lie derivative computation in the CBF-QP, rather than for policy optimization or imagined rollouts. This design prevents the policy from extrapolating into action regions unsupported by the dataset and preserves the OOD robustness of our expectile-based objective. Overall, our method combines the distributional robustness of offline RL with the rigor of CBF-based control to obtain an offline-learned, actuation-aware neural CBF without requiring online rollouts.

\section{Background and Problem Setup}
\label{section: background}

We consider a control-affine nonlinear dynamical system defined by the state $\state(t) \in \mathcal{X} \subseteq \mathbb{R}^n$, the control input $u(t) \in \mathbb{U} \subseteq \mathbb{R}^m$, and governed by the following dynamics: 
\begin{align}
    \dot{\state}(t) = f(\state(t)) + g(\state(t))u(t),
\end{align}\label{eq: system_dyn}
where $f: \mathbb{R}^n \to \mathbb{R}^n$ and $g: \mathbb{R}^n \to \mathbb{R}^{n \times m}$ are locally Lipschitz continuous functions. We are given a set $\mathcal{C} \subseteq \mathcal{X}$ that represents the \textit{safe states} for the system and a failure set $\mathcal{F} \subseteq \mathcal{X}$ that represents the set of unsafe states for the system (e.g., obstacles for an autonomous ground robot). Furthermore, the system is controlled by a Lipschitz continuous control policy $\pi: \mathbb{R}^n \to \mathbb{R}^m$. Our focus lies in ensuring the safety of this dynamical system, which is formally defined as follows: 
\begin{definition}[Safety]
    \label{def:positive-invariance}
   A dynamical system is considered safe if the set, $\mathcal{C} \subseteq \mathcal{X} \subseteq \mathbb{R}^n$, is positively invariant under the control policy, $\pi$, i.e, $x(0)\in \mathcal{C}, u(t) = \pi(x(t))$ $\implies$ $x(t) \in \mathcal{C}, ~\forall~t \geq 0$. 
\end{definition}

Since, $\mathcal{F} \subseteq \mathcal{X} \setminus \mathcal{C}$,  it can be trivially shown that $x(t) \in \mathcal{C} \implies x(t) \notin \mathcal{F}~\forall~t \geq 0$. Using this premise, we define the main objective of this paper: 
\begin{objective}\label{obj: Main_obj}
    Our objective is to synthesize a safe policy $\pi_{\mathrm{safe}} : [t, T) \times \mathcal{X} \to \mathbb{U}$ such that the resulting closed-loop system satisfies the positive invariance property specified in Definition~\ref{def:positive-invariance}.
\end{objective}

\subsection{Control Barrier Functions}
Control Barrier Functions~\citep{ames2014control, Ames_2017} are widely used to synthesize control policies with positive invariance guarantees, thereby ensuring system safety. The initial step in constructing a Control Barrier Function (CBF) involves defining a continuously differentiable function $B: \mathcal{X} \to \mathbb{R}$, where the \textit{super-level set} of $B$ corresponds to the safe region $\mathcal{C}$. This leads to the following representation:  
\begin{align}
\label{eq:setc1}
    \mathcal{C} & = \{\state \in \mathcal{X} : B(\state) \geq 0\}, \quad
    \mathcal{X} \setminus \mathcal{C} = \{\state \in \mathcal{X} : B(\state) < 0\}.
\end{align}

The interior and boundary of $\mathcal{C}$ are further specified as:  
\begin{align}
    \text{Int}(\mathcal{C}) & = \{\state \in \mathcal{X} : B(\state) > 0\}, \quad
    \partial\mathcal{C} = \{\state \in \mathcal{X} : B(\state) = 0\}.
\end{align}
The function $h$ qualifies as a valid Control Barrier Function if it satisfies the following definition:

\begin{definition}[\citep{Ames_2017}]{
\label{definition: CBF definition}
Given a control-affine system $\dot x=f(x)+g(x)u$, the set $\mathcal{C}$ defined by \eqref{eq:setc1}, with $\frac{\partial B}{\partial \state}(\state) \neq 0$ for all $\state \in \partial \mathcal{C}$, the function $B$ is called the Control Barrier Function (CBF) defined on the set $\mathcal{X}$, if there exists an extended \textit{class}-$\mathcal{K}$ function $\kappa$ such that for all $\state \in \mathcal{X}$:
\begin{equation}
\label{eq: lie_derivative}
\begin{aligned}
    \max_{u\in \mathbb{U}}\! \left[\underbrace{\mathcal{L}_{f} B(\state) + \mathcal{L}_g B(\state)u} \iffalse+ \frac{\partial B}{\partial t}\fi_{\dot{B}\left(\state, u\right)} \! + \kappa\left(B(\state)\right)\right] \! \geq \! 0,
\end{aligned}
\end{equation}
where $\mathcal{L}_{f} B(\state) = \frac{\partial B}{\partial \state}f(\state)$ and $\mathcal{L}_{g} B(\state)= \frac{\partial B}{\partial \state}g(\state)$ are the Lie derivatives and $n$ is the dimension of the system.}
\end{definition}

As established in \citep{Ames_2017}, any Lipschitz continuous control law $\pi(\state)$ that satisfies the condition $\dot{B} + \kappa(B) \geq 0$ guarantees the system's safety when $x(0) \in \mathcal{C}$. Additionally, if the initial state $x(0)$ lies outside $\mathcal{C}$, this condition ensures asymptotic convergence to the safe set $\mathcal{C}$. 

While CBFs provide a principled framework to guarantee safety, their practical deployment is hindered by the lack of general methods for constructing valid barrier functions. As a result, practitioners typically resort to handcrafted or domain-specific CBFs, which can yield overly conservative safe sets. Furthermore, in the presence of control bounds, a nominal CBF may conflict with feasibility requirements, causing the corresponding CBF-QP to become infeasible. 

\subsection{Control Barrier Value Function (CBVF)}

To overcome the limitations inherent in classical CBF formulations, \citep{choi2021robust} proposed the \emph{Control Barrier Value Function} (CBVF), which integrates Control Barrier Functions with Hamilton--Jacobi (HJ) Reachability ~\citep{bansal2017hamilton}. We begin by encoding the safety specification using a Lipschitz continuous function $\ell: \mathbb{R}^n \to \mathbb{R}$, where the failure set is defined as $\mathcal{F} := \{ x \in \mathcal{X} \mid \ell(x) < 0 \}$. Under this construction, a CBVF $B : \mathcal{X} \to \mathbb{R}$ is defined as the viscosity solution of the following Hamilton--Jacobi--Bellman Variational Inequality (HJB-VI):
\begin{equation} \label{eq:hjvi}
    \min\!\left\{\, \max_{u\in \mathbb{U}}(\nabla B(x)\!\cdot\!(f(x) + g(x)u)),\;  \ell(x) - B(x) \,\right\} = 0,
\end{equation}
with boundary condition $B(x)|_{t=0}= \ell(x)|_{t=0}$.
The resulting value function induces a forward-invariant safe set 
$\mathcal{C} := \{ x \in \mathcal{X} \mid B(x) \ge 0 \}$ and ensures that admissible controls $u \in \mathbb{U}$ satisfy the following Lie-derivative condition:
\begin{equation}
    \max_{u\in \mathbb{U}} \big[ L_f B(x) + L_g B(x)u + \kappa(B(x)) \big] \ge 0.
\end{equation}

\textbf{Safe Controller Synthesis using CBVFs:} Quite often, we have a reference control policy, $\pi_{ref}(x)$, designed to meet the performance requirements of the system. However, such controllers often lack safety guarantees. To ensure the system meets its safety requirements while preserving performance, the reference controller must be minimally adjusted to incorporate safety constraints. This adjustment can be accomplished using the Control Barrier Value Function-based Quadratic Program (CBVF-QP), described as follows:

\begin{equation}
\begin{aligned}
\label{eq: CBF_QP}
\pi_{\mathrm{safe}}(x) &= \min_{u \in \mathbb{U} \subseteq \mathbb{R}^m} \norm{u - \pi_{ref}}^2\\
\quad & \textrm{s.t. } \mathcal{L}_f B(x) + \mathcal{L}_g B(x)u + \kappa \left(B(x)\right) \geq 0.
\end{aligned}
\end{equation}

The CBVF-QP framework facilitates the synthesis of a provably safe control policy, $\pi_{\mathrm{safe}}(x)$, while staying close to the reference controller to preserve system performance.

\textbf{Challenges in CBVF Synthesis:} Traditional approaches compute Control Barrier Value Functions using grid-based HJ reachability
methods~\citep{Mitchell2005ATO}, which are fundamentally limited by the curse of dimensionality. Recent efforts have attempted to overcome these issues by learning CBVFs through online reinforcement learning~\citep{so2024train}; however, as discussed in Section~\ref{section: introduction}, such approaches require extensive online interaction, rendering them unsuitable for safety-critical systems. Furthermore, existing neural CBF methodologies~\citep{abate2020formal, NEURIPS2023_120ed726, tayal2024learning} typically assume access to accurate system dynamics, an assumption that does not hold for many real-world platforms. These limitations motivate a shift toward using offline demonstrations, either sourced from public datasets~\citep{liu2024dsrl, Sun_2020_CVPR} or collected in controlled settings where safety can be guaranteed. Building on this premise, we refine Objective~\ref{obj: Main_obj} as follows:

\begin{objective}\label{obj: cbvf_demo}
Our objective is to synthesize a Control Barrier Value Function $B: \mathcal{X} \to \mathbb{R}$ directly from an offline dataset of demonstrations $\mathcal{D}$, such that the resulting CBVF-QP controller $\pi_{\mathrm{safe}}$ in~\eqref{eq: CBF_QP} aims to satisfy the positive invariance property specified in Definition~\ref{def:positive-invariance}.
\end{objective}

\section{Methodology}
\label{section: method}
Having introduced the CBVF formulation in the previous section, we now describe a practical methodology for synthesizing a valid Control Barrier Value Function and thereby the safe controller from offline demonstrations $\mathcal{D}$. Our objective is to construct a data-driven approximation of the viscosity solution of the HJB-VI \eqref{eq:hjvi} without relying on known system dynamics or online interaction. The key idea is to re-interpret it through a finite-difference barrier recursion compatible with demonstration data, and to use this recursion to learn a value-guided barrier function that inherits forward invariance.

\subsection{Finite-Difference Barrier Synthesis}
To approximate the CBVF using trajectories in $\mathcal{D}$, we consider a finite difference recursive version of equation~\eqref{eq:hjvi}. Given a trajectory $\{x_t, u_t\}_{t=0}^N$, we define the finite-difference barrier update
\begin{equation}\label{eq: BD_assign}
    B(x_t) = \min\big\{\ell(x_t),\, \max_{u_t} B(x_{t+1})\big\}, \qquad \forall t \in \{0,1,2,\dots\},
\end{equation}
where, $x_t = x(t)$, $u_t = u(t)$ and $x_{t+1} = x(t+\Delta t)$. This recursion has two significant advantages. First, it enables us to learn a barrier directly from data without requiring $f$ and $g$. Second, under mild regularity assumptions, \eqref{eq: BD_assign} preserves the forward invariance property of the CBVF. The derivation of the recursive equation is provided in Appendix~\ref{appendix: finite_difference_proof}. Intuitively, the recursion encodes the principle that a state is safe if and only if its immediate successor is safe or it lies outside the unsafe set as specified by $\ell(x)$.

To represent this barrier function, we parameterize a neural network $B^{\mathcal{D}}_{\psi}(x)$ with $\psi$ utilizing the universal approximation property. 
However, directly solving for the recursion \eqref{eq: BD_assign} can lead to degenerate solutions. For instance, setting $B^{\mathcal{D}}_{\psi}(x)=c$ for a sufficiently small constant $c$ satisfies \eqref{eq: BD_assign} but clearly does not satisfy the CBVF conditions in \eqref{eq:hjvi}. This pathology is analogous to the non-contractive behavior of undiscounted value iteration in MDPs. Following the approach in \citep{Fisac2019HJSafety}, we incorporate a discounted finite-difference value to avoid such trivial solutions:

\begin{equation}\label{eq: loss-def}
    B^{\mathcal{D}}_{Target}
    = (1-\gamma)\,\ell(x) + \gamma \min\big\{\ell(x),\, \max_{u}B^{\mathcal{D}}_{\psi}(x')\big\},
\end{equation}


where $x = x_t$, $x' = x_{t+1}$ and $u = u_t$ and $\gamma \rightarrow 1$. This discounted recursion ensures contraction, promotes stable learning, and prevents the network from collapsing to uniformly unsafe or uniformly safe solutions. To avoid evaluating barrier targets at out-of-distribution actions, we remove the maximization over actions from the \eqref{eq: loss-def} and use only the demonstrated action in each transition. While this prevents unsupported queries, the resulting estimate reflects the safety profile of the behaviour policy that generated the data. Consequently, this produces a behaviour-induced barrier, which is typically sub-optimal because it ignores other admissible actions that could yield larger safe-set estimates.

\begin{figure*}[!t]
  \centering
  \includegraphics[width=\textwidth]{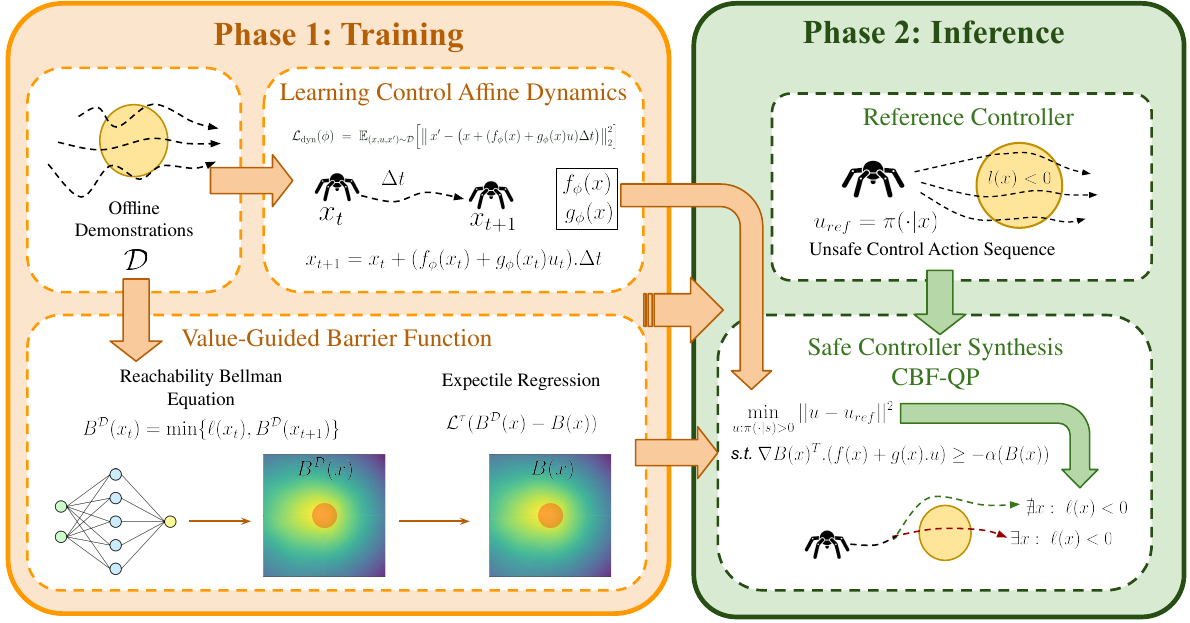} 
  \caption{\textbf{Framework Overview:} \textit{(Left)}: We learn value guided barrier function with reachability based bellman equation for learning optimal safe region and apply expectile regression for OOD case handling, when learning from Offline Dataset. \textit{(Right)}: Inferencing with CBF-QP using learned barrier function as valid CBF to rollout safe step-wise actions, with any reference controller.}
  \label{fig:topright}
\end{figure*}

\subsection{Avoiding Out-of-Distribution Actions in Offline Learning}~\label{subsec: ood_actions}
The naïve regression objective in \eqref{eq: loss-def} with the maximization over actions dropped, fits $B^{\mathcal{D}}_{\psi}$ to the mean of the demonstrated next-state targets, but this corresponds to the behavior-induced barrier and yields overly conservative safe sets. Ideally, if we assume unlimited capacity and no sampling error, the optimal parameters should satisfy, $B(x) \approx \mathbb{E}_{u}\big[\min\big\{\ell(x),\, \max_{u}B(x')\big\} \big]$. However, such unconstrained maximization can result in actions that are never observed in the dataset. 

Since offline data only provides information about those actions selected by the behavior policy, evaluating values (or barrier targets) using unsupported actions can distort learning because the corresponding transitions are not grounded in the dataset. 
Subsequently, motivated by the insights of Implicit Q-Learning (IQL) \citep{ilya2022IQL}, 
under idealized conditions, deterministic dynamics and disturbance-free data, prioritizing the best observed next-state outcomes can yield strong empirical performance, since emphasizing high-value targets leads to safer barrier estimates. However, this naïve maximization is brittle in practice: rare poor actions or stochastic disturbances can occasionally produce optimistic next states, and focusing only on the single best outcome can cause the learner to overfit such spurious transitions. Expectile regression provides a simple remedy by emphasizing high next-state barrier values that actually appear under dataset-supported actions, rather than committing to a single extreme outcome. In this way, it preserves the benefit of prioritizing good actions while remaining robust to noise and avoiding evaluations on unseen $(x,u)$ pairs. For details and sensitivity to $\tau$, see Appendix~\ref{appendix: expectile_regression} and Appendix~\ref{appendix: tau-sensitivity}, respectively.
This allows us to perform a principled value-style backup over the dataset control action without extrapolating to unsafe or unobserved actions. This IQL-inspired method shows that expectile regression produces a value function that reflects the values induced by the behavior policy without requiring an explicit behavior model. 
This prevents the learning target from being influenced by actions that lie outside the dataset support, while still capturing the highest feasible values supported by the demonstrations.

Following this principle, we estimate a CBVF, $B_{\theta}$, with $\theta$ as the Neural Network parameters, that reflects the safety values implied by demonstrated actions. Formally, we minimize the expectile loss
\begin{equation}\label{eq:learn-B}
    \mathcal{L}_B(\theta) 
    \;=\; 
    \mathbb{E}_{(x, u, x')\sim\mathcal{D}} \big[\mathcal{L}^{\tau}\!\big(B^{\mathcal{D}}_{Target}(x) - B_{\theta}(x)\big) \big],
\end{equation}
where $\mathcal{L}^{\tau}(y) = \lvert \tau - \mathbf{1}(y < 0)\rvert\,y^2$ is the $\tau$-expectile loss used in \citep{ilya2022IQL} and we replace the original objective term $B^\mathcal{D}_{Target}$ from \eqref{eq: loss-def} with the new objective $B^\mathcal{D}_{Target} = (1-\gamma) \ell(x) + \gamma\min\{\ell(x), B_{\theta}(x')\}$ which incorporates $B_{\theta}(x')$ in place of $\max_{u} B^\mathcal{D}_{\psi}(x')$. Intuitively, a higher expectile level $\tau$ places greater weight on underestimation errors than overestimation errors, pushing $B_{\theta}$ toward the upper envelope of safety values supported by the dataset. Thus, $\tau$ controls how aggressively the learned barrier emphasizes high, data-supported safety values without extrapolating to unseen actions.  In practice, this is implemented by computing the expectile loss over the empirical state-action transitions within each sampled minibatch, thereby restricting the updates to the dataset-supported action set without requiring explicit support construction. The barrier function thus obtained, $B_{\theta}$, is our proposed \textit{Value-guided Offline Control Barrier Function} (V-OCBF). The complete learning procedure for $B^{\mathcal{D}}_{\psi}$ and the V-OCBF $B_{\theta}$ is summarized in Algorithm~\ref{alg: learn_v-ocbf} and can be visually seen in Fig.~\ref{fig:topright}.

\begin{algorithm}[!htbp]
\caption{Learning V-OCBF from Offline Demonstrations}
\label{alg: learn_v-ocbf}
\begin{algorithmic}[1]
\Require Offline dataset $\mathcal{D}$, expectile level $\tau$, batch size $m$, learning rates $\eta_\phi,\eta_\theta$, discount factor $\gamma$
\State Initialize $\psi,\theta$ and optimizers
\For{epoch = 1 \textbf{to} $N$}
  \For{minibatch $\mathcal{B}\subset \mathcal{D}$}
    \State $\mathcal{L}_B \leftarrow \frac{1}{|\mathcal{B}|} \sum_{(x, u, x') \in \mathcal{B}} \mathcal{L}^\tau \left( B^\mathcal{D}_{Target}(x) - B_{\theta}(x) \right)$
    \hfill\(\triangleright\) From Eq.~\ref{eq:learn-B}
    \State $\theta \gets \text{AdamStep}(\theta,\nabla_{\theta} \mathcal{L}_B)$
  \EndFor
\EndFor
\State \textbf{Return} $B_{\theta}$
\end{algorithmic}
\end{algorithm}

\subsection{Controller Synthesis via Learned Dynamics}~\label{subsec: QP_Synthesis}
The learned barrier function $B$ is subsequently employed to fulfill the primary goal of synthesizing a safe policy~\ref{obj: Main_obj} using the CBVF-QP formulation in~\eqref{eq: CBF_QP}. Solving this QP necessitates the evaluation of the Lie derivatives $\mathcal{L}_f B$ and $\mathcal{L}_g B$, both of which rely on the underlying control-affine system dynamics. In our offline-only setting, the true dynamics are unavailable; therefore, we construct a neural network–based surrogate model to approximate the underlying transition dynamics of the form:
\begin{equation}
x_{t+1} = x_t + \big(f_{\phi}(x_t) + g_{\phi}(x_t) u_t\big)\,\Delta t,
\end{equation}
which enables computation of the required derivatives and supports safe policy synthesis. The model parameters $\phi$ are trained using one-step transitions from the offline dataset $\mathcal{D}$ by minimizing the prediction loss:
\begin{equation}\label{eq:dynamics_loss}
\mathcal{L}_{\mathrm{dyn}}(\phi)
=
\mathbb{E}_{(x,u,x')\sim\mathcal{D}}
\Big[
    \|\,x' - \big(x + (f_{\phi}(x) + g_{\phi}(x)u)\Delta t\big)\,\|^2
\Big],
\end{equation}
implemented as a minibatch MSE objective. The full training procedure is summarized in Algorithm~\ref{alg:learn_dynamics}.

Importantly, the learned dynamics model is \emph{not} used when learning the barrier function. Incorporating it into the CBVF learning stage would require evaluating terms involving $(f_{\phi}, g_{\phi})$ under actions outside the dataset-supported set $\mathcal{U}_{\mathcal{D}}$, thereby violating the action constraints critical for preventing value underestimation in the offline regime. Hence, using learned dynamics during CBVF training would allow the network to extrapolate into unsupported regions of the action space, defeating the purpose of the OOD-aware barrier learning objective described earlier.

In contrast, at \emph{inference} time, the learned dynamics serve a different role: they enable the evaluation of Lie derivatives needed to solve the CBVF-QP (\eqref{eq: CBF_QP}). Specifically, for any query state $x$, we compute
\begin{equation} \label{eq: lie-derivative}
\mathcal{L}_f B(x) = \nabla_x B_{\theta}(x)^\top f_{\phi}(x),
\qquad
\mathcal{L}_g B(x) = \nabla_x B_{\theta}(x)^\top g_{\phi}(x).
\end{equation}
These quantities allow the QP in~\eqref{eq: CBF_QP} to be solved for the safe control action $u_{safe}$, completing the pipeline for constructing a safe controller purely from offline demonstrations. 

\begin{algorithm}
\caption{Controller Synthesis}
\label{alg:learn_dynamics}
\begin{algorithmic}[1]
\Require Offline dataset \(\mathcal{D}=\{(x,u,x')\}\), batch size \(m\), epochs \(N\), learning rate, \(\Delta t\), $u_{ref}(\cdot)$, $\kappa$
\State Initialize $\phi$ and optimizer
\For{epoch = 1 to \(N\)}
  \For{minibatch \(\mathcal{B}\subset\mathcal{D}\)}
    \State $L_{\mathrm{dyn}}(\phi) \leftarrow (x, u, x', \Delta t, \phi)$ \hfill\(\triangleright\) From Eq.~\ref{eq:dynamics_loss} 
    \State \(\phi \gets \text{AdamStep}(\phi,\nabla_\phi L_{\mathrm{dyn}})\)
  \EndFor
\EndFor
\bigskip

\State \textbf{Inference at state $\hat{x}$}
  \State $L_f B(\hat{x}) \leftarrow (\nabla_x B_{\theta}(\hat{x}), f_{\phi}(\hat{x}))$ and $L_g B(\hat{x}) \leftarrow (\nabla_x B_{\theta}(\hat{x}), g_{\phi}(\hat{x}))$
  \hfill\(\triangleright\) From Eq.~\ref{eq: lie-derivative}
  \State \(u_{safe} \gets\) \texttt{QP-Solver}\(\big(L_f B(\hat{x}),\,L_g B(\hat{x}),\,u_{\mathrm{ref}}(\hat{x}),\,B_{\theta}(\hat{x}), \kappa \big)\)
  \hfill\(\triangleright\) From Eq.~\ref{eq: CBF_QP}
  
\State \textbf{return} safe controller \(u_{safe}(\hat{x})\)
\end{algorithmic}
\end{algorithm}

Note that one-step forward-invariance holds under the idealized assumptions (exact barrier and exact Lie-derivatives), and the approximation sources like function approximation, finite-data estimation, and model error can break the closed-loop guarantee in practice.
In Section~\ref{sssec: learned dynamics}, we provide an analysis demonstrating the limitations of using learned dynamics inside the CBVF learning loop, further reinforcing the necessity of restricting their use to the inference stage only.

\section{Experiments}
\label{section: experiments}

The experiments are designed to evaluate: (\emph{i}) the safety and performance of V-OCBF relative to constrained offline RL and neural CBF baselines on systems with unknown dynamics, (\emph{ii}) the advantages of value-guided barriers over behavior-policy–induced barriers, (\emph{iii}) the robustness of the resulting QP controller under external disturbances, and (\emph{iv}) the effectiveness of V-OCBF compared to a CBVF synthesized using learned dynamics.


\textbf{Baselines:} We compare V-OCBF against a diverse set of constrained offline learning and CBF-based methods. For constrained offline learning, we include \textbf{Behavior Cloning (BC)}, \textbf{BEAR-Lag} (Lagrangian constraint version of \citep{kumar2019stabilizing}), \textbf{COptiDICE} \citep{lee2022coptidice}, and \textbf{FISOR} \citep{1} which enforce safety indirectly via soft constraints on policy optimization or behavior imitation. For CBF-based approaches, we evaluate \textbf{Neural Control Barrier Function (NCBF)} \citep{Alexander2020NCBF}, \textit{Conservative Control Barrier Function (CCBF)} \citep{tabbara2025CCBF}, and \textit{In-Distribution Barrier Function (iDBF)} \citep{Fernando2023iDBF}, which synthesize explicit safety filters from offline data but often yield conservative safe sets. In contrast, V-OCBF learns a \emph{value-guided} barrier function from offline demonstrations that accounts for future unsafe interactions, producing a state-wise safety filter that aims to reduce violations and increase safe-set coverage in practice.

\textbf{Evaluation Metrics:} We evaluate all methods based on (i) \emph{safety}, measured as the total number of safety violations incurred before episode termination, and (ii) \emph{performance}, measured via the cumulative episode rewards. These metrics allow us to assess the trade-off between strict safety enforcement and task performance across different offline RL and CBF-based approaches.

\subsection{Experimental Case Studies}
To perform a holistic performance analysis of our proposed approach, we apply V-OCBF in conjunction with Behavior Cloning (BC) as the nominal (reference) controller for all the different environments which are supposed to assess varying objectives. Below we list all the environments that we use:
\begin{itemize}
    \item \textbf{Autonomous Ground Vehicle (AGV) Collision Avoidance:} In our first experiment, we examine a $3$-dimensional collision avoidance problem involving an autonomous ground vehicle governed by Dubins' car dynamics~\citep{dubins1957curves}. The objective is to ensure safety by avoiding a static obstacle while navigating through a bounded environment. Further details on the system dynamics, state space bounds, and experimental setup are provided in Appendix~\ref{appendix: Dubins}.
    
    \item \textbf{MuJoCo Safety Gymnasium:} We next evaluate our framework on Safety Gymnasium environments \citep{ji2023safety}. Specifically, we evaluate the V-OCBF-based QP (equation \ref{eq: CBF_QP}) on high-dimensional MuJoCo tasks like Hopper, Swimmer, Half Cheetah, Walker2D and Ant. The objective in each environment is to maximize reward while keeping the agent velocity below the velocity thresholds. We keep the reward and safety-violation metrics identical to the Safety-Gymnasium definitions and use the standard DSRL dataset for safe offline RL \citep{liu2024dsrl}. To evaluate our method against baselines, we randomly sampled 500 initial states for each environment, respectively, the results for which can be referred to from Figure \ref{fig:results}.
\end{itemize}

\begin{figure*}
  \centering  
  \begin{subfigure}[b]{\textwidth}
    \centering
    \includegraphics[width=\textwidth]{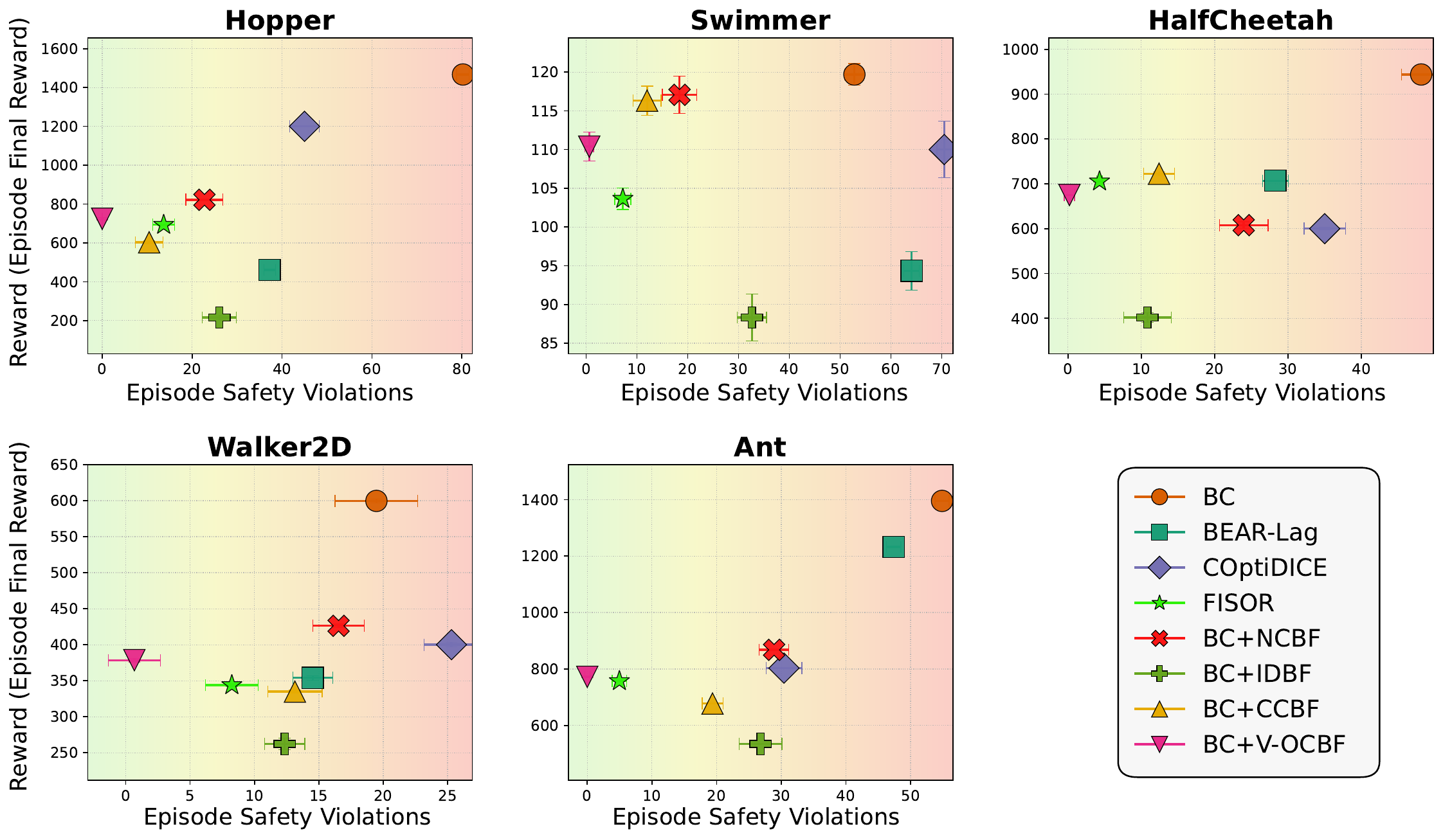}
  \end{subfigure}
\vspace{-1.8em}
  \caption{Results plot for all the Mujoco based Safety Gymnasium \citep{ji2023safety} environments. Points towards \textcolor{teal}{left} ($\leftarrow$) are more safe than those on \textcolor{red}{right} ($\rightarrow$). Evaluated over 500 episodes and 5 seed values.}
  \label{fig:results}
\end{figure*}

\subsection{Results}

\subsubsection{Effectiveness in Co-optimizing Safety and Performance}

We begin by evaluating all methods on the AGV Collision Avoidance task, which provides a clear setting to study how different approaches balance safety and performance. The results in Table~\ref{tab:dubins} highlight notable differences in how offline RL and CBF-based methods handle this trade-off.

Offline RL baselines such as BC, BEAR-Lag, and COptiDICE achieve relatively low safety rates. BC tends to reproduce unsafe behaviors from the dataset, while BEAR-Lag and COptiDICE try to account for safety but remain limited because they operate with soft constraint formulations. Their lower reward and safety scores indicate that they struggle to balance both safety and performance objectives.

In contrast, methods that incorporate a CBF-QP layer, such as BC+NCBF, BC+iDBF, and BC+CCBF, achieve much higher safety rates. The QP ensures that unsafe actions are filtered out, even if the nominal controller is imperfect. However, the performance of these approaches still depends heavily on the quality of the learned barrier. FISOR performs better than the other offline RL baselines because it explicitly expands the feasible safe region before optimizing for performance. However, due to lack of explicit safety filtering, it leads to lesser safety rates than our proposed method, due to the impending learning errors in the computation of feasible region. This also highlights the importance of QP based safety filtering scheme for achieving better safety. Overall, \textit{V-OCBF achieves the strongest results across all metrics}. 

\begin{table}[h]
\centering
\small
\caption{AGV Collision Avoidance Experiment: Percentage Safe Episodes, Mean Episode Reward  and Safe Set Volume (refer Appendix \ref{appendix: brt_volume}) across different methods. Evaluated over 500 episodes and 5 seed values.}
\begin{tabular}{lccc}
\toprule
\textbf{Method} & \textbf{Safe Episodes (\%)} & \textbf{Episode Reward} & \textbf{Safe Set Volume (\%)} \\
\midrule
BC  & 48.92 $\pm$ 1.69 & 20.45 $\pm$ 1.84 & 42.51\\
BEAR-Lag \citep{kumar2019stabilizing} & 65.12 $\pm$ 0.24 & 13.85 $\pm$ 0.81 & 58.21\\
COptiDICE \citep{lee2022coptidice} & 68.91 $\pm$ 0.32 & 15.33 $\pm$ 0.67 & 62.32 \\
BC+NCBF \citep{Alexander2020NCBF}  & 92.48 $\pm$ 0.60 & 44.61 $\pm$ 2.58 & 81.92\\
BC+iDBF \citep{Fernando2023iDBF}   & 92.87 $\pm$ 0.73 & 48.23 $\pm$ 2.01 & 83.32\\
BC+CCBF \citep{tabbara2025CCBF}  & 93.56 $\pm$ 0.56 & 49.66 $\pm$ 2.34 & 90.94\\
FISOR \citep{1} & 95.78 $\pm$ 0.2 & 52.33 $\pm$ 0.93 & 90.14 \\
BC+\textbf{V-OCBF} (Ours)  & \textbf{98.28} $\pm$ \textbf{0.54} & \textbf{54.93} $\pm$ \textbf{0.46} & \textbf{92.57} \\
\bottomrule 
\end{tabular}
\label{tab:dubins}
\end{table}

To further analyze scalability, we extend the evaluation to MuJoCo Safety Gymnasium environments (Hopper, Half-Cheetah, Ant, Swimmer, and Walker2D), with unknown dynamic models. The results in Figure~\ref{fig:results} demonstrate that V-OCBF again achieves the lowest safety violation rates across all tasks. Notably, the method maintains near-zero violations on  while preserving satisfactory reward levels compared to BC and outperforming iDBF, NCBF, and CCBF. These neural CBF baselines degrade sharply in higher dimensions: NCBF suffers from optimization difficulties, while iDBF often enforces overly restrictive boundaries that suppress task performance. FISOR again remains competitive but does not match the safety consistency of V-OCBF. 

Overall, the experiments provide strong empirical evidence that V-OCBF effectively co-optimizes safety and performance, scaling from low-dimensional AGV dynamics to complex MuJoCo systems. The method consistently outperforms existing offline RL and neural CBF baselines in terms of safety while maintaining competitive reward, highlighting its suitability for offline settings where both strict safety and reliable performance are required.

\begin{figure*}
  \centering  
    \centering
    \includegraphics[width=\textwidth]{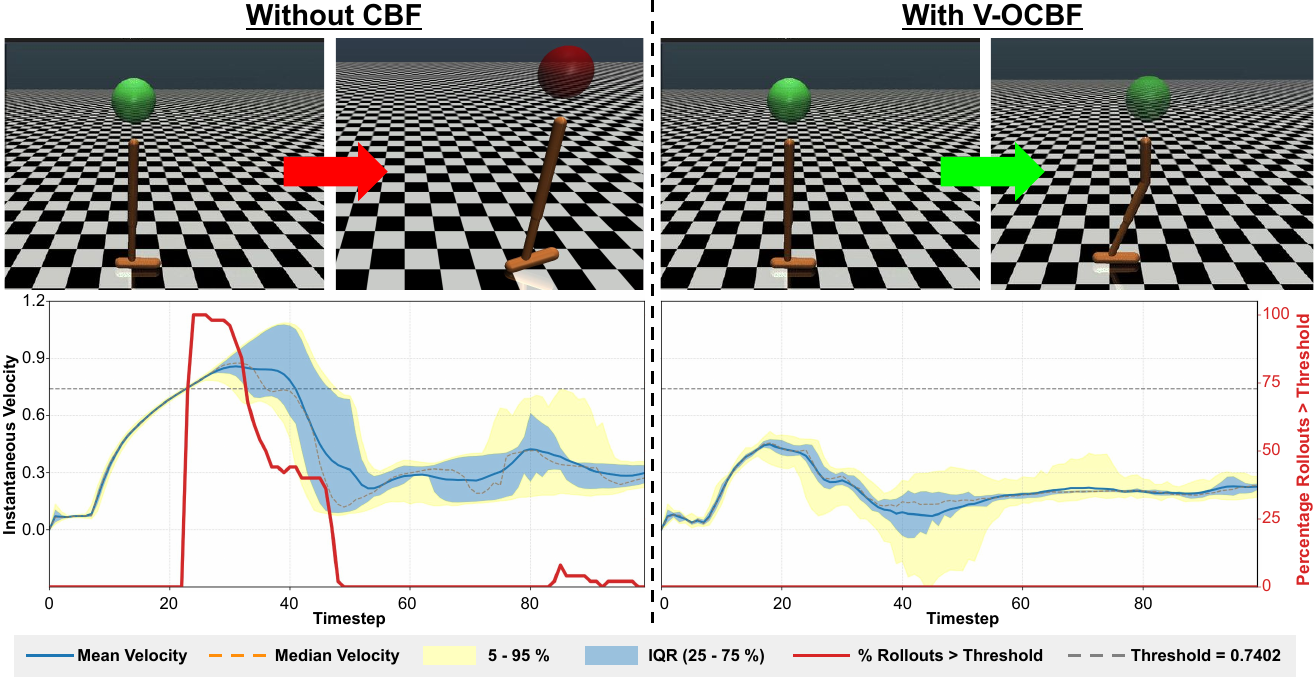}
  \caption{
  \textbf{Qualitative comparison of Hopper rollouts with and without V\text{-}OCBF.}
  \textit{(Top):}  When using V\text{-}OCBF, the agent adapts its gait to keep its forward velocity within the allowable safety threshold, demonstrating how the learned barrier actively regulates unsafe accelerations. 
  \textit{(Bottom):} Instantaneous velocities across episodes for both methods. The left axis reports the velocity profile, while the right axis shows the fraction of episodes that violate the safety threshold at each timestep. V\text{-}OCBF substantially reduces violation frequency, illustrating its ability to enforce safety constraints during deployment. Additional qualitative results for other environments are provided in Appendix~\ref{appendix: qual_anlys}.
  }
  \label{fig:Hopper_V-OCBF}
\vspace{-1em}
\end{figure*}

\subsubsection{Effect of learned dynamics}\label{sssec: learned dynamics}
We additionally examine how using the learned dynamics influences both CBVF learning and QP-based control on the AGV environment, where ground-truth dynamics are available for comparison. Table~\ref{tab: agv_dyn} reports safety and reward for three configurations: (i) V-OCBF with a QP evaluated using learned dynamics, (ii) V-OCBF with a QP evaluated using true dynamics, and (iii) a neural CBVF trained using learned dynamics and QP-based controller evaluated on the same dynamics.

Across all metrics, using learned dynamics at inference time yields performance that is close to the known-dynamics case, indicating that the controller synthesis using QP is robust to moderate model error. In contrast, using learned dynamics during CBVF training leads to a noticeable drop in both safety and reward. This aligns with our earlier discussion in section~\ref{subsec: QP_Synthesis}: learning the barrier through a model introduces distributional mismatch because the learned dynamics may generate actions outside the dataset-supported set $\mathcal{U}_{\mathcal{D}}$, yielding suboptimal targets for $B$. 
These findings reinforce our design choice, dynamics should be used only at inference to compute the lie derivatives in the QP, while barrier learning itself should avoid dependence on a learned model. 

\begin{table}[htbp]
\centering
\small
\caption{Use of Learned Dynamics at the time of Training v/s Inference (Mean over 500 episode rollouts).}
\begin{tabular}{lccc}
\toprule
\textbf{Method} & \textbf{Safe Episodes (\%)} & \textbf{Episode Reward} \\
\midrule
BC+\textbf{V-OCBF} (QP with Learned Dynamics)  & 98.28 $\pm$ 0.54 & 54.93 $\pm$ 0.46  \\
BC+\textbf{V-OCBF} (QP with Known Dynamics)  & 99.52 $\pm$ 0.58 & 55.91 $\pm$ 0.39 \\
BC+\textbf{CBVF from Learned Dynamics}  & 94.64 $\pm$ 0.65 & 46.40 $\pm$ 0.02 \\
\bottomrule
\end{tabular}
\label{tab: agv_dyn}
\end{table}

\subsection{Ablation Studies and Additional Experiments}\label{subsec: ablation}

To further evaluate the robustness of V-OCBF, we conduct a series of studies that examine: (i) the sensitivity of the learned safe set to the size of the barrier network, (ii) the effect of disturbances on the safety of QP based controller, (iii) the dependence of the learned barrier on the expectile parameter $\tau$, and (iv) the resilience of the framework to approximation errors in the learned dynamics model. Additionally, we extend our evaluation to a boat navigation task with nonlinear drift to assess performance under complex environmental dynamics. Across these experiments, V-OCBF exhibits stable safety performance over a broad range of architectural and hyperparameter choices, remaining resilient even under perturbations to the control actions. These findings suggest that the framework generalises reliably beyond the specific tasks considered and is well-suited for settings where safety certification must be carried out under imperfect system knowledge. The full results and visualisations for these studies are presented in Appendix~\ref{appendix: ablation_study} and Appendix~\ref{appendix: additional_exp}.

\section{Conclusion, Limitations and Future works}
\label{section: conclusions}

This work presented V-OCBF, a value-guided framework for learning control barrier functions directly from offline demonstrations. By combining a finite-difference CBVF recursion with an expectile-based objective, V-OCBF propagates safety information in a principled manner while avoiding evaluations on unsupported actions, enabling reliable CBF-QP control even when dynamics are unknown and action authority is limited. Experiments across AGV and high-dimensional Safety Gymnasium environments show that V-OCBF consistently reduces safety violations while maintaining strong task performance, outperforming both constrained offline RL and neural CBF baselines.

While V-OCBF reliably learns useful control-barrier candidates from offline demonstrations and enables QP-based safety filtering under unknown dynamics and limited action authority, the learned barrier is not a formally verified certificate, function approximation and learned-dynamics errors mean guarantees remain empirical. Furthermore, the framework is not explicitly trained to handle adversarial disturbances or worst-case model errors. To close this gap, future work will explore integration of formal-certification (e.g., Lipschitz-based verification \citep{tayal2024learning} and conformal prediction \citep{lindemann2025formal}). Moreover, we will explore robust extensions of V-OCBF that incorporate adversarial or uncertainty-aware training objectives, enabling safety guarantees under stronger disturbance models and broader distribution shifts.

\bibliography{main}
\bibliographystyle{tmlr}

\newpage
\appendix

\section{Theoretical Insights}

\subsection{Finite Difference Barrier Condition}\label{appendix: finite_difference_proof}
\begin{lemma}[Finite-Difference Approximation of the Barrier Condition]
Consider a control-affine system $\dot{x} = f(x) + g(x)u$ with a continuously differentiable barrier function $B:\mathcal{X} \rightarrow \mathbb{R}$. The continuous-time barrier condition
\begin{equation}
\min\{\, \max_u \nabla B(x)^{\top} (f(x)+g(x)u),\; \ell(x) - B(x) \,\} = 0
\label{eq:ct-barrier}
\end{equation}
admits the finite-difference approximation
\begin{equation}
\min\{\, \max_{u_t} B(x_{t+1}) ,\; \ell(x_t) \,\} = B(x_t),
\label{eq:fd-barrier}
\end{equation}
up to a positive scaling by $\Delta t$.
\end{lemma}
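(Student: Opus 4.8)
The plan is to derive \eqref{eq:fd-barrier} from \eqref{eq:ct-barrier} by an explicit (forward) Euler discretization of the flow, a first-order Taylor expansion of $B$, and a final reconciliation of the two arguments of the $\min$ using the scale-invariance of the underlying variational inequality. I would emphasize from the outset that the two arguments of \eqref{eq:ct-barrier} are dimensionally different — the first is a rate (a Lie derivative) and the second is a value difference $\ell(x)-B(x)$ — and that this is the source of the ``positive scaling by $\Delta t$'' qualifier.

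First I would discretize the dynamics by the Euler step $x_{t+1} = x_t + (f(x_t)+g(x_t)u_t)\,\Delta t$, so that $x_{t+1}-x_t = (f(x_t)+g(x_t)u_t)\,\Delta t$. Since $B$ is continuously differentiable, a first-order Taylor expansion about $x_t$ gives
\begin{equation}
B(x_{t+1}) = B(x_t) + \nabla B(x_t)^{\top}\big(f(x_t)+g(x_t)u_t\big)\,\Delta t + o(\Delta t),
\end{equation}
so that $\nabla B(x_t)^{\top}(f(x_t)+g(x_t)u_t)\,\Delta t = B(x_{t+1}) - B(x_t) + o(\Delta t)$. Because $B(x_t)$ and $\Delta t>0$ are independent of the control, the maximization commutes through, yielding $\Delta t\,\max_u \nabla B(x_t)^{\top}(f+gu) = \max_{u_t} B(x_{t+1}) - B(x_t) + o(\Delta t)$. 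This identifies the first argument of \eqref{eq:ct-barrier}, once scaled by $\Delta t$, with the finite-difference increment $\max_{u_t}B(x_{t+1}) - B(x_t)$.

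Next I would handle the $\min$ structure. Writing $A = \max_u \nabla B(x)^{\top}(f+gu)$ and $C = \ell(x)-B(x)$, the condition $\min\{A,C\}=0$ is exactly the complementarity statement $A \ge 0$, $C \ge 0$, $AC=0$. This root set is invariant under independent positive rescaling of each argument: for $\lambda_1,\lambda_2>0$ one has $\min\{A,C\}=0 \iff \min\{\lambda_1 A,\lambda_2 C\}=0$, since positive factors preserve both the signs and the zero set. Choosing $\lambda_1=\Delta t$ and $\lambda_2=1$ lets me substitute the finite-difference form into the first slot while leaving $\ell(x_t)-B(x_t)$ untouched, giving $\min\{\max_{u_t}B(x_{t+1}) - B(x_t),\; \ell(x_t)-B(x_t)\} = o(\Delta t)$. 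Finally, absorbing the common term via the identity $\min\{a-c,\,b-c\}=\min\{a,b\}-c$ with $c=B(x_t)$, and discarding the $o(\Delta t)$ remainder as $\Delta t\to 0$, recovers $\min\{\max_{u_t}B(x_{t+1}),\,\ell(x_t)\}=B(x_t)$, i.e.\ \eqref{eq:fd-barrier}, and matches the recursion \eqref{eq: BD_assign} used in the main text.

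The main obstacle is precisely the scaling mismatch between the two arguments: discretizing the Lie-derivative term produces a factor of $\Delta t$ (a rate becomes an increment), whereas the obstacle term $\ell-B$ already has the units of a value and carries no such factor, so a naive uniform multiplication by $\Delta t$ would not reproduce \eqref{eq:fd-barrier}. The conceptual crux is recognizing that the variational inequality is insensitive to \emph{independent} positive rescalings of its arguments, which justifies scaling only the Hamiltonian term. I would also be careful to record the regularity assumptions that legitimize the remainder estimate — continuous differentiability of $B$ together with local Lipschitz continuity of $f,g$, which guarantee the Euler increment is $O(\Delta t)$ and hence that the neglected terms are genuinely $o(\Delta t)$ — and to state that \eqref{eq:fd-barrier} is the exact fixed-point relation obtained in the $\Delta t\to 0$ limit.
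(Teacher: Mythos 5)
Your proposal is correct and follows essentially the same route as the paper's proof: a forward Euler/first-order Taylor identification of the Lie-derivative term with the increment $B(x_{t+1})-B(x_t)$, followed by the observation that positively rescaling one argument of the $\min$ does not change its zero set, which lets the $\Delta t$ factor be dropped. Your version is slightly more explicit about the $o(\Delta t)$ remainder and the complementarity reading of $\min\{A,C\}=0$, but the underlying argument is identical.
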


\begin{proof}
Starting from the continuous-time condition in \eqref{eq:ct-barrier}, note that
\begin{equation}
\nabla B(x)^{\top} (f(x)+g(x)u)
= \nabla B(x)^{\top} \dot{x}
= \frac{d}{dt} B(x(t)).
\label{eq:dBdt}
\end{equation}
Using a forward Euler approximation, the time derivative satisfies
\begin{equation}
\frac{d}{dt} B(x(t)) \approx \frac{B(x_{t+1}) - B(x_t)}{\Delta t},
\label{eq:euler}
\end{equation}
where, $x_t = x(t)$, $u_t = u(t)$ and $x_{t+1} = x(t+\Delta t)$.
Substituting \eqref{eq:euler} into \eqref{eq:ct-barrier} yields
\begin{equation}
\min\left\{\, \max_{u_t}\frac{B(x_{t+1}) - B(x_t)}{\Delta t},\; \ell(x_t) - B(x_t) \,\right\} = 0.
\label{eq:scaled-min}
\end{equation}

Let $a = B(x_{t+1}) - B(x_t)$ and $b = \ell(x_t) - B(x_t)$. Since $\Delta t > 0$, the function
\[
m(a,b) = \min\{\, a/\Delta t,\; b \,\}
\]
satisfies $m(a,b)=0$ if and only if $\min\{a,b\}=0$. This follows from the fact that scaling one argument of the minimum by a positive constant does not change which argument is smaller, nor the value at which the minimum equals zero. Thus, \eqref{eq:scaled-min} is equivalent to
\begin{equation}
\min\{\, a,\; b \,\} = 0,
\end{equation}
which gives the finite-difference barrier condition in \eqref{eq:fd-barrier}. This establishes the claim.
\end{proof}

\subsection{Expectile Regression}\label{appendix: expectile_regression}
Expectile regression is a classical tool in statistics and econometrics for estimating asymmetric statistics of a random variable. For a random variable $X$, the $\tau$-expectile is defined as the minimizer of the asymmetric least-squares problem
\begin{equation}
m_{\tau}
=\arg\min_{m}
\mathbb{E}_{x\sim X}\left[
L^{\tau}(x - m)
\right],
\end{equation}
where $L_{\tau}(y)=\lvert \tau - \mathbf{1}(y < 0)\rvert,y^{2}$.

When $\tau > 0.5$, this loss assigns more weight to samples above the estimate $m_{\tau}$ and less weight to samples below it. Conversely, $\tau < 0.5$ emphasizes lower values. Thus, expectiles interpolate smoothly between the mean ($\tau = 0.5$) and a “high-value–seeking" statistic as $\tau \rightarrow 1$.

Expectile regression can also be extended to learning conditional expectiles:
\begin{equation}
m_{\tau}(x)
=\arg\min_{m(\cdot)}
\mathbb{E}_{(x,y)\sim\mathcal{D}}
\left[
L^{\tau}(y - m(x))
\right],
\end{equation}
which can be optimized efficiently via stochastic gradient descent. This makes expectiles easy to implement in modern machine-learning pipelines, unlike alternative high-order statistics that require specialized solvers.

\subsubsection{Why Expectile Regression in V-OCBF?}

In V-OCBF, the barrier function ideally requires computing $\max_{u' \in \mathcal{U}} B(x')$, but performing this maximization directly is problematic in the offline setting because the dataset supports only a restricted action set $\mathcal{U}_{\mathcal{D}} = \{\,u \mid (x,u,x') \in \mathcal{D}\,\}$, and querying $B(x')$ for unseen actions $u' \notin \mathcal{U}_{\mathcal{D}}$ introduces unsupported targets that can distort or destabilize learning. Moreover, even restricting the maximization to $\mathcal{U}_{\mathcal{D}}$ leads to a brittle update since a strict maximum is extremely sensitive to noise or rare outlier transitions, often producing poor barrier estimates. Expectile regression provides a smooth and data-supported alternative by emphasizing higher (safer) barrier values when $\tau>0.5$, effectively approximating the maximal barrier value over $\mathcal{U}_{\mathcal{D}}$ while never querying out-of-distribution actions, thereby yielding a stable and principled mechanism for synthesising offline CBVF.

\paragraph{Expectile regression resolves these issues.}
For $\tau > 0.5$, the expectile estimator concentrates on the largest barrier values consistent with the dataset. This yields a smooth approximation to $\max_{u' \in \mathcal{U}_{\mathcal{D}}} B(x'),$ which is equivalent to performing a constrained maximization over only the dataset-supported actions, without ever querying $B$ on out-of-distribution actions.

To illustrate the effect of the expectile parameter we show barrier functions learned for the AGV case study using $\tau\in\{0.5,0.6,0.7,0.8,0.9,0.99\}$ (Figure~\ref{fig:various-tau-expectile}). As $\tau$ increases the volume of the sub-zero level set visibly shrinks: lower $\tau$ values produce larger sub-zero regions because the learning target is more influenced by poorer next-state outcomes present in the data, which inflates the set of states that appear unsafe under the learned barrier. By contrast, larger $\tau$ up-weights higher (safer) barrier targets within the dataset and yields a smoother, upper-envelope–like approximation to the maximization over dataset-supported actions; sampling actions from this higher expectile therefore produces a smaller sub-zero level set (i.e., a tighter unsafe region) and correspondingly larger estimated safe set.

\begin{figure}
  \centering  
  \begin{subfigure}[b]{\textwidth}
    \centering
    \includegraphics[width=\textwidth]{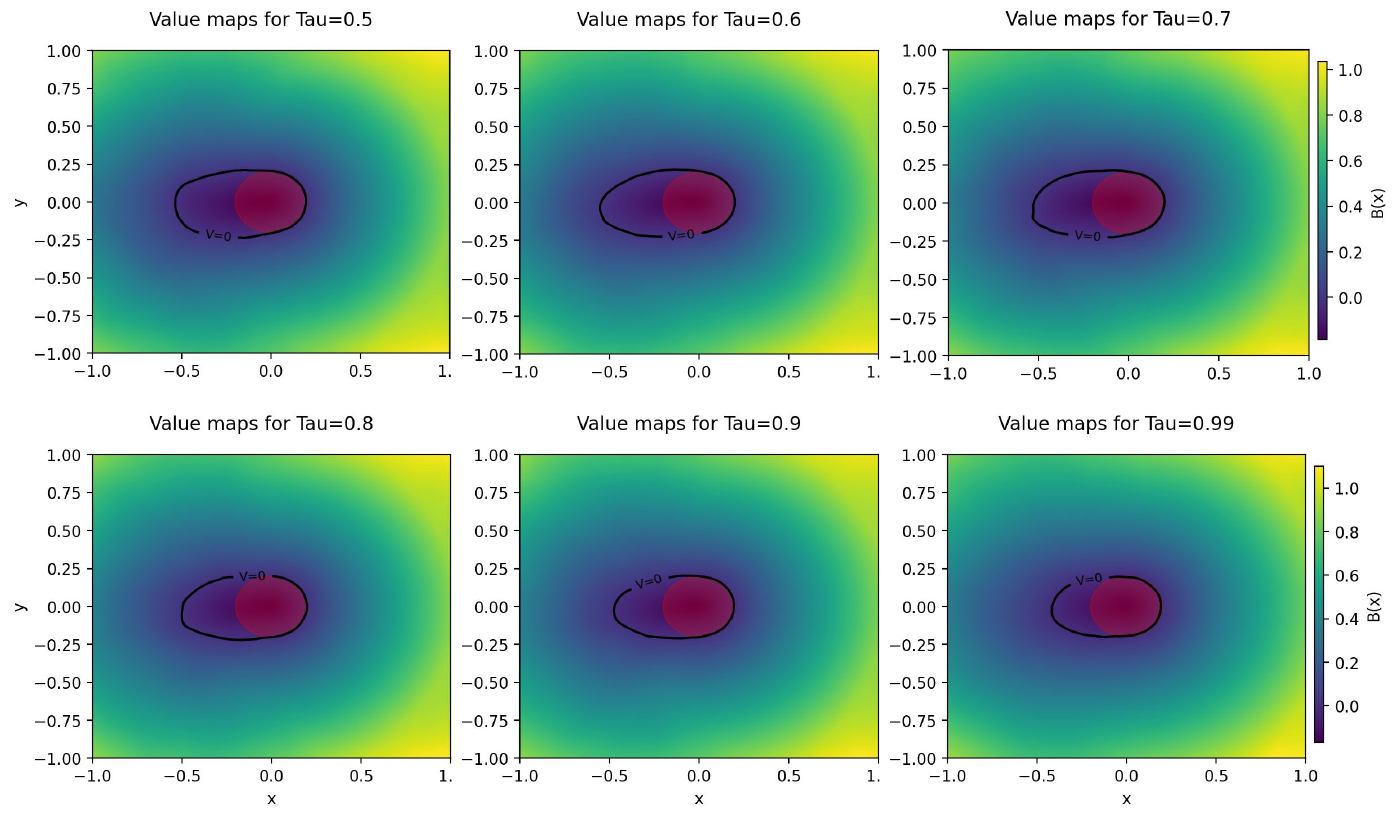}
  \end{subfigure}
  \caption{$\mathbf{\tau}$\textbf{-Expectiles:} Figure shows a depiction of the learned Barrier Function for different $\tau$ values with Expectile Regression. With increasing value of $\tau ~(> 0.5)$, the feasible region increases, indicating a more accurate representation of the true barrier function.}
  \label{fig:various-tau-expectile}
\end{figure} 



\subsection{Feasibility of V-OCBF-QP}
Formal safety guarantees of standard QP-based CBF formulations generally hold under the assumption of unbounded actuation. In practice, however, in the presence of control bounds, a nominal CBF may conflict with feasibility requirements, causing the corresponding CBF-QP to become infeasible. To address this within our framework, we adopt a standard relaxation technique from the safe control literature by introducing a slack variable $\delta$. This relaxes the hard safety constraint when strictly necessary while heavily penalizing any violations. The resulting slack-augmented V-OCBF-QP is formulated as:

\begin{equation}
\begin{aligned}
\label{eq: CBF_QP_slack}
\pi_{\mathrm{safe}}(x) &= \argmin_{u \in \mathbb{U}, \delta \in \mathbb{R}} \|u - \pi_{\text{ref}}(x)\|^2 + \lambda \delta^2\\
\quad & \textrm{s.t. } \mathcal{L}_f B(x) + \mathcal{L}_g B(x)u + \kappa \left(B(x)\right) + \delta \geq 0,
\end{aligned}
\end{equation}

where $\lambda > 0$ is a penalty coefficient. 

Empirically, $\delta$ remains close to zero throughout our experiments. Because V-OCBF explicitly accounts for control constraints during learning, the synthesized policy naturally respects admissible bounds. Consequently, the safety filter rarely becomes infeasible, and any triggered corrective adjustments require minimal slack usage.

\section{Description of the Experiments}

\begin{figure}
  \centering  
  \begin{subfigure}{\textwidth}
    \centering
    \includegraphics[width=\textwidth]{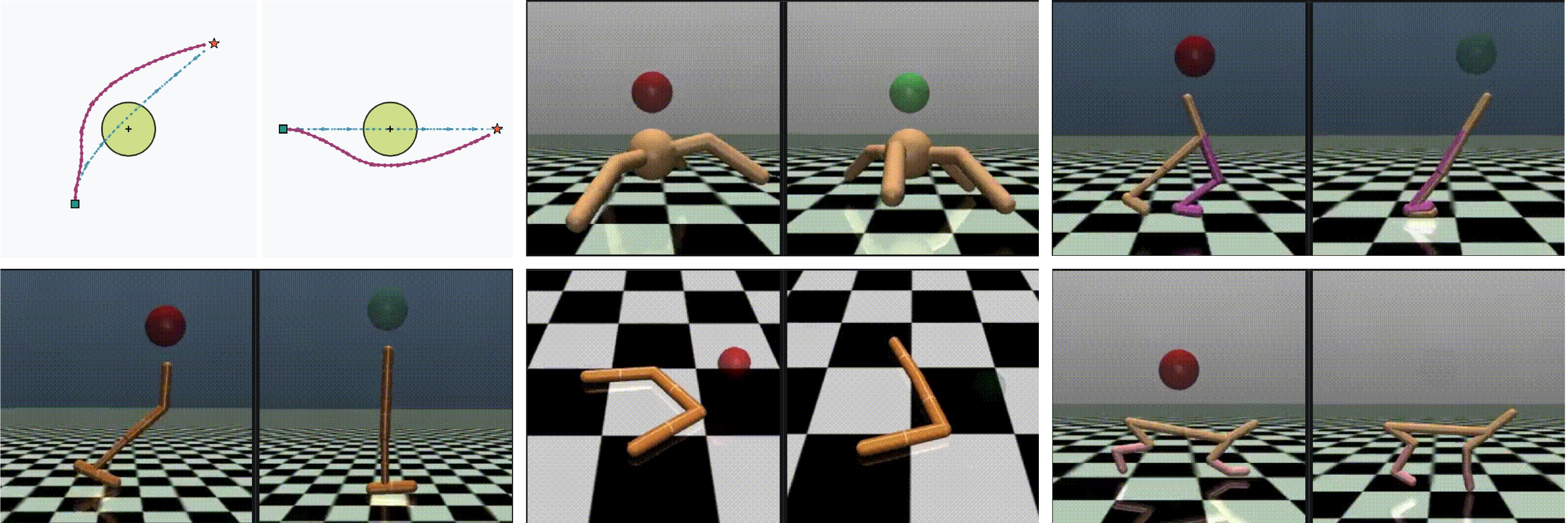}
    \label{fig:plot}
  \end{subfigure}
\vspace{-2em}
  \caption{\textbf{Illustrations of the test environments:} (\textit{Top-Left}) environment illustrates Autonomous Ground Vehicle with an obstacle and a goal point. (\textit{MuJoCo Env}) \textbf{\textcolor{red}{Red}} sphere denotes unsafe state.}
  \label{fig:env_illustration}
\end{figure}

\subsection{Autonomous Ground Vehicle Collision Avoidance}\label{appendix: Dubins}

We consider a collision-avoidance task for an autonomous ground vehicle modeled using the Dubins dynamics:

\begin{equation}
\begin{bmatrix}
\dot{x}_1 \\
\dot{x}_2 \\
\dot{\Phi}
\end{bmatrix}
=
\begin{bmatrix}
v \cos\Phi \\
v \sin\Phi \\
0
\end{bmatrix}
+
\begin{bmatrix}
0 \\ 0 \\ 1
\end{bmatrix} u,
\end{equation}

where the state $x = [x_1, x_2, \phi]^T \in \mathcal{X} \subseteq \mathbb{R}^3$ encodes the vehicle's position and heading. The forward velocity is fixed at $v = 0.6$ units/s, and the admissible state space is given by $\mathcal{X} = [-1,1]^2 \times [-\pi, \pi]$. The control input $u \in [-1,1]$ denotes the angular velocity $\omega$.

The reward function $r(x)$ is defined as
\begin{equation}
    r(x) = \frac{C}{\Bigg(\left\|
    \begin{bmatrix} x_1 \\ x_2 \end{bmatrix}
    -
    \begin{bmatrix} x_{g1} \\ x_{g2} \end{bmatrix}
    \right\| + \epsilon\Bigg)}
\end{equation}
where $C=0.1$ is the scaling factor and $(x_{g1}, x_{g2})^T$ denotes the goal location. Rewards are accumulated only while the agent remains collision-free. The constant $\epsilon > 0$ is included to prevent numerical instability when the vehicle approaches the goal.

Safety is encoded through the function
\begin{equation}
    \ell(x) = \left\|
    \begin{bmatrix} x_1 \\ x_2 \end{bmatrix}
    -
    \begin{bmatrix} x_{o1} \\ x_{o2} \end{bmatrix}
    \right\| - R,
\end{equation}
where $(x_{o1}, x_{o2})^T=(0,0)^T$ denotes the center of the obstacle and $R=0.2$ units is the radius of the obstacle.

\paragraph{Offline Data Generation.}
Since this is a custom environment, we construct an offline dataset for training both the barrier function and the safe policy. We sample $1500$ initial states uniformly from $\mathcal{X}$ and simulate each trajectory for $500$ discrete timesteps with step size $\Delta t = 0.01s$. During data collection, control inputs are drawn uniformly at random from the admissible range, ensuring diverse system trajectories for learning-based safety certification.





\subsubsection{Safe Set Volume Analysis} \label{appendix: brt_volume}
Backward Reachable Tube is the set of initial states for which the agent acting optimally, will eventually reach the target set $\mathcal{T}$ within the time horizon $[0,t]$:
\begin{equation}
    SSV(x) = \{x: \exists u(\cdot), \exists t \in [0,T], \xi^u_{x,0}(t) \in \mathcal{T}\}
\end{equation}
where $\xi^u_{x, 0}(t)$ represents system state at time $t \in [0, T]$, and $u$ defined over time horizon $[0,T]$, represents sequence of control actions over the time horizon.

\begin{figure}
  \centering  
  \begin{subfigure}[b]{\textwidth}
    \centering
    \includegraphics[width=\textwidth]{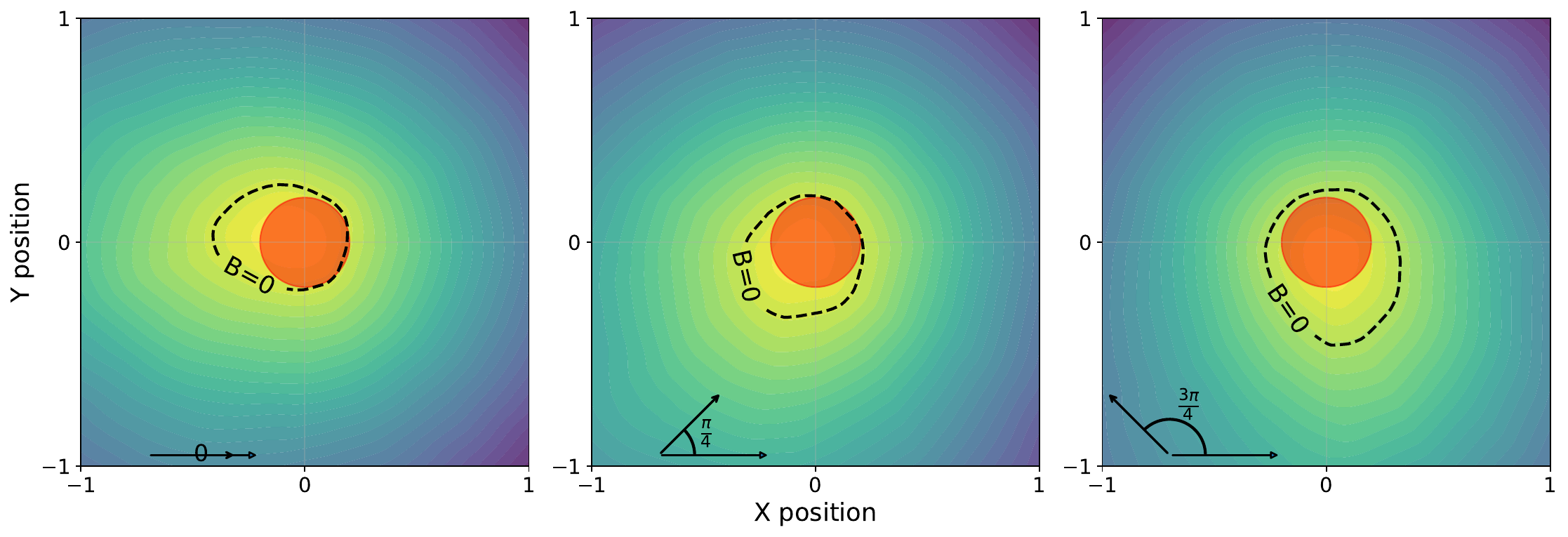}
  \end{subfigure}
  \caption{\textbf{Safe Set Volume:} Figure shows a depiction of feasible region (i.e., anything outside B=0 enclosure) for V-OCBF for 3 different heading angles. Note that calculation of the actual Safe Set volume is a cumulative of feasible region over countless such frames along the continuous heading angle ($\phi$) dimension.}
  \label{fig:safe-set-volume}
\end{figure} 

We performed Safe Set Volume (SSV) analysis to inspect the effectiveness and conservatism of various approaches. Specifically, we aim to evaluate the volume of the state space from which the agent can safely initialize. Effectively, the larger this Safe Set volume is, bigger the feasible region gets, hence, making it easy to navigate in the environment, and smaller the Safe Set volume becomes, lesser the feasible region gets, thus,  overall making it less safe for the agent to navigate in the environment.

To evaluate the safe initialization region that each of the method casts, we uniformly sample different initial states across the state space and check whether they belong to the above-defined Safe Set Volume by rolling out trajectories in the online environment for over 100k times (as we started to observe convergence in the Safe Set Volume after the 100k mark). We investigate the results for this analysis in the main paper (refer Table \ref{tab:dubins}) where we show that V-OCBF enables the largest safe feasible region among all the various approaches.

\subsection{Safety MuJoCo Environments:}
To evaluate our framework on higher-dimensional systems, we use MuJoCo Safety Gymnasium environments \cite{ji2023safety}, where safe velocity tasks introduce velocity constraints for Gymnasium's MuJoCo-v4 agents.

The agent receives a scalar cost signal on the basis of its velocity at each step:
\begin{equation}
    cost = bool(V_{current} > V_{threshold})
\end{equation}
This constraint can be framed as a safety function
\begin{equation}
\ell(x) = V_{threshold} - V_{current}
\end{equation}
since $\ell(x) \ge 0$ corresponds to safe state. This safety value function provides us a continuous signal instead of sparse values of 0 or 1. For agent specific threshold velocities and time-step $\Delta t$, we refer the official documentation, the official values for which have been compiled and included in Table \ref{tab:velocity_thresh}.

\begin{table}[!h]
\centering
\small
\caption{Threshold velocity $V_{threshold}$, time step size $\Delta t$ and action space $\mathbb{U}$ for each MuJoCo based Safety Gymnasium \cite{ji2023safety} environment as per official documentation.}
\begin{tabular}{cccccc}
\toprule
\makecell[c]{\textbf{Environment}} & \textbf{Hopper}
  & \makecell[c]{\textbf{Half-Cheetah}} 
  & \makecell[c]{\textbf{Ant}} 
  & \makecell[c]{\textbf{Walker2D}}
  & \makecell[c]{\textbf{Swimmer}} \\
\toprule
$\boldsymbol{V_{threshold}}$ & 0.7402 & 3.2096 & 2.6222 & 2.3415 & 0.2282 \\
\hline
$\boldsymbol{\Delta t}$ & 0.008 & 0.05 & 0.05 & 0.008 & 0.04 \\
\hline
$\boldsymbol{\mathbb{U}}$ & $[-1.0, 1.0]^3$ & $[-1.0, 1.0]^6$ & $[-1.0, 1.0]^8$ & $[-1.0, 1.0]^6$ & $[-1.0, 1.0]^2$ \\
\bottomrule
\end{tabular}
\label{tab:velocity_thresh}
\end{table}

\subsubsection{Qualitative Analysis} \label{appendix: qual_anlys}

In continuation to the Hopper plot in the main paper (Fig. \ref{fig:Hopper_V-OCBF}), we include plots for all the remaining MuJoCo safety gymnasium environments covered in this paper with the visualization of the episode rollouts, both with and without V-OCBF. Given the safety rates of V-OCBF compared to the other baselines (refer Fig. \ref{fig:results}), it becomes important to visually realize the real-time effect on performance (Figures \ref{fig:qualitative_analysis_walker}, \ref{fig:qualitative_analysis_halfcheetah}, \ref{fig:qualitative_analysis_ant}).

\begin{figure}[H]
  \centering  
  \begin{subfigure}{\textwidth}
    \centering
    \includegraphics[width=\textwidth]{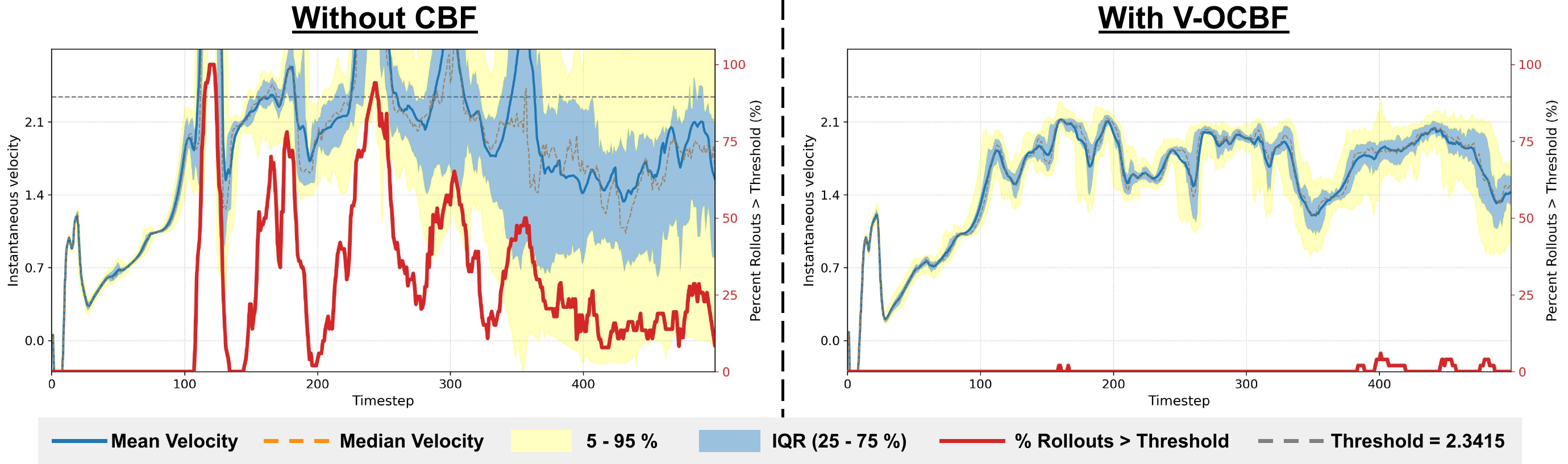}
  \end{subfigure}
  \caption{\textbf{Walker-2D.} Instantaneous velocity (left Y-axis) across episodes. Right Y-axis depicts percentage of episodes when the agent violates the velocity threshold at a particular timestep.}
  \label{fig:qualitative_analysis_walker}
\end{figure} 

\begin{figure}[H]
  \centering  
  \begin{subfigure}{\textwidth}
    \centering
    \includegraphics[width=\textwidth]{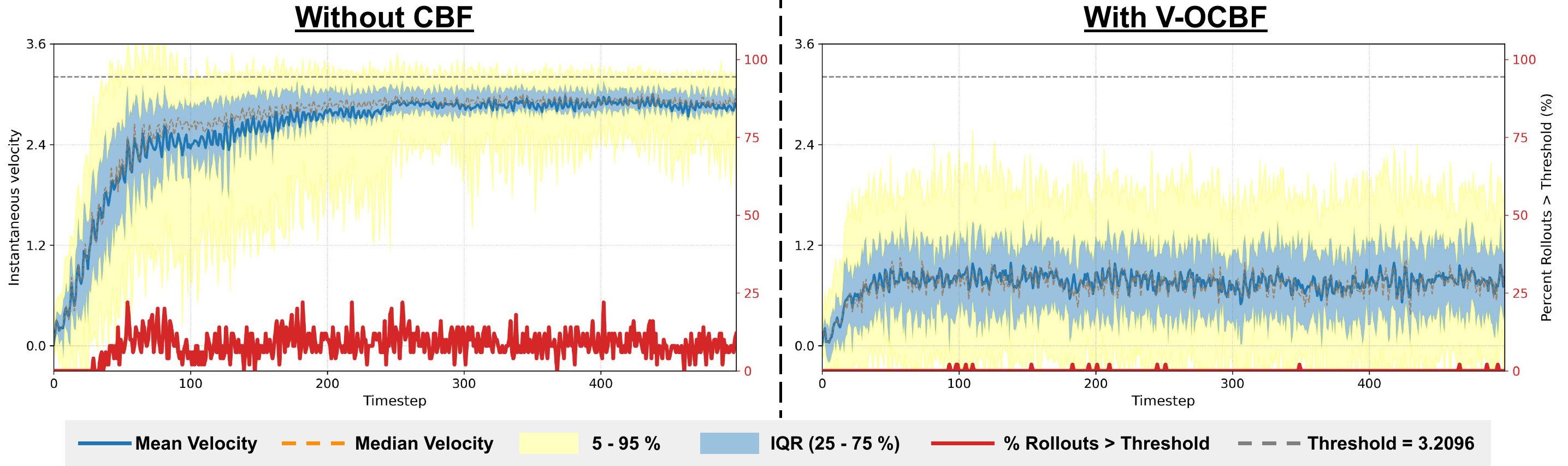}
  \end{subfigure}
  \caption{\textbf{Half-Cheetah.} Instantaneous velocity (left Y-axis) across episodes. Right Y-axis depicts percentage of episodes when the agent violates the velocity threshold at a particular timestep.}
  \label{fig:qualitative_analysis_halfcheetah}
\end{figure} 

\begin{figure}[H]
  \centering  
  \begin{subfigure}{\textwidth}
    \centering
    \includegraphics[width=\textwidth]{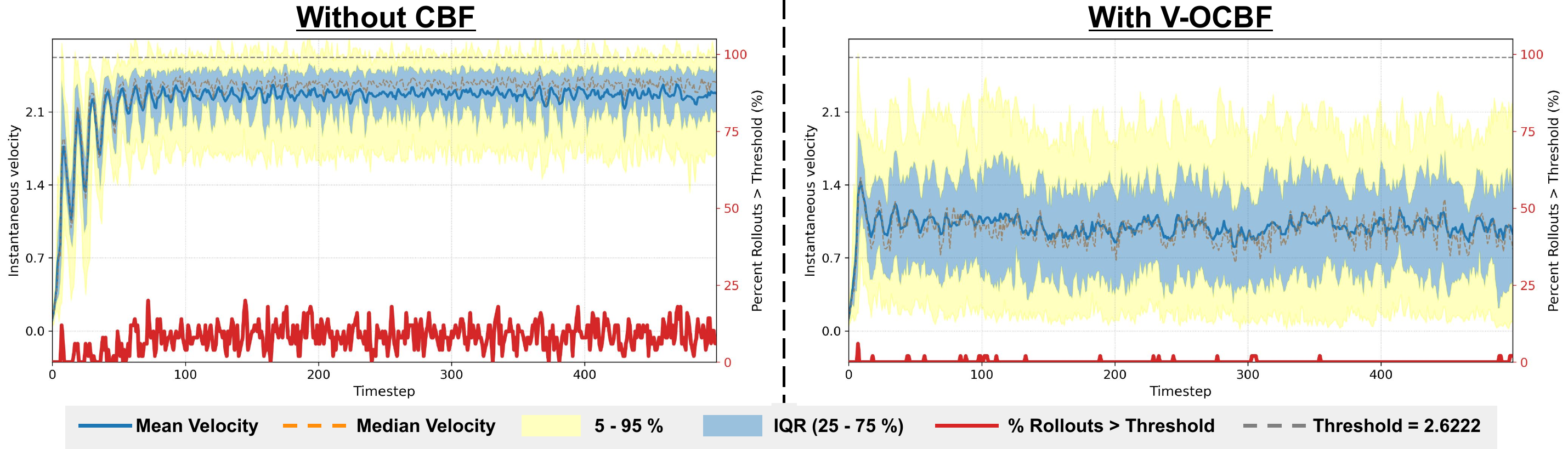}
  \end{subfigure}
  \caption{\textbf{Ant.} Instantaneous velocity (left Y-axis) across episodes. Right Y-axis depicts percentage of episodes when the agent violates the velocity threshold at a particular timestep.}
  \label{fig:qualitative_analysis_ant}
\end{figure}

\section{Ablation Studies}\label{appendix: ablation_study}
We have performed a set of ablation studies on our primary testing environment of Autonomous Ground Vehicle Collision Avoidance to further strengthen our claims of the paper. AGV acts as an ideal testing bed for such ablations as it is easy to visualize and infer from the results in a 2D space. Besides, the environment properly encapsulates our requirements of providing a safety critical control environment. Following subsections cover the ablations we have performed.

\subsection{Robustness to size of the NN}
In this section we show our framework's robustness against varying model sizes, thus, demonstrating the versatility on different hyperparameter choices. Table \ref{tab: hyperparameter_ablation} shows the safety rate for different model dimensions.

\begin{table}[!htbp]
\centering
\small
\caption{Safety Rate (\%) reported for various model dimensions. Columns represent Width: neurons in each hidden layer, and Rows represent number of layers.}
\begin{tabular}{ccccc}
\toprule
\makecell[c]{\textbf{Width} $\rightarrow$ \\ \textbf{Layers} $\downarrow$}
  & \makecell[c]{\textbf{256} \\ \textbf{Neurons}} & \makecell[c]{\textbf{96} \\ \textbf{Neurons}} & \makecell[c]{\textbf{64} \\ \textbf{Neurons}} & \makecell[c]{\textbf{48} \\ \textbf{Neurons}} \\
\midrule
 \textbf{2}  & \textbf{98.28 $\pm$ 0.54} & 97.18 $\pm$ 0.41 & 96.96 $\pm$ 1.21 & -- \\
 \textbf{3}  & 98.04 $\pm$ 0.30 & 97.36 $\pm$ 0.22 & 97.12 $\pm$ 0.23 & 96.54 $\pm$ 1.62 \\
 \textbf{4}  & -- & -- & -- & 96.82 $\pm$ 1.24 \\

\bottomrule
\end{tabular}
\label{tab: hyperparameter_ablation}
\end{table}

As can be seen from the Table \ref{tab: hyperparameter_ablation}, the models don't affect the safety performance significantly, until we drop the size by too much (as in the cases of \textbf{(48,3)}, \textbf{(48,4)} and \textbf{(64,2)}). We consider the subtle drop is safety rate for the lower dimensional networks to be within considerable limits.

\subsection{Robustness to errors in Learned Dynamics} \label{appendix: robust_dyn}
To evaluate the robustness of our pipeline to errors in the learned dynamics, we measure how additive perturbations affect the safety performance of the CBF-based controllers. Concretely, we perturb the computed safe action \(\omega\) by adding zero-mean Gaussian noise \(\mathcal{N}(0,\sigma^2)\) with a range of standard deviations \(\sigma\) (refer Table \ref{tab: robust_dyn}). For every noise level we run 500 independent trials (with 5 different random seeds each) and report the safety rate defined as the fraction of trials that remain within the safe set over the evaluation horizon. Table \ref{tab: robust_dyn} summarizes the results across several CBF variants, for each method we report mean safety rate \(\pm\) standard error. Across noise levels spanning small to large perturbations, the safety rates degrade only mildly, indicating that the learned CBFs tolerate moderate inaccuracies in the dynamics / control signal. These results suggest that, while precise dynamics models improve performance, our approach provides substantially safety-aware control even when the dynamics model is imperfect.

\begin{table}[!htbp]
\centering
\small
\caption{Safety Rate (\%) reported for Robustness Testing of \textbf{learned Dynamics} for Autonomous Ground Vehicle by adding upto 5\%, 10\% and 20\% of the possible control action as noise.}
\begin{tabular}{lcccc}
\toprule
\textbf{Method} & \textbf{No Noise}
  & \makecell[c]{\textbf{+ Small Noise}\\\textbf{(5\%)}} 
  & \makecell[c]{\textbf{+ Medium Noise}\\\textbf{(10\%)}} 
  & \makecell[c]{\textbf{+ Large Noise}\\\textbf{(20\%)}} \\
\midrule
BC+NCBF \cite{Alexander2020NCBF}  & 92.48 $\pm$ 0.60 & 92.48 $\pm$ 0.62 & 92.52 $\pm$ 0.41 & 92.32 $\pm$ 0.56 \\
BC+iDBF \cite{Fernando2023iDBF}   & 92.87 $\pm$ 0.73 & 92.84 $\pm$ 0.64 & 92.40 $\pm$ 0.84 & 92.76 $\pm$ 0.28 \\
BC+CCBF \cite{tabbara2025CCBF}  & 93.56 $\pm$ 0.56 & 93.72 $\pm$ 0.58 & 93.68 $\pm$ 0.80 & 93.76 $\pm$ 0.38 \\
BC+\textbf{V-OCBF} (Ours)  & \textbf{98.28} $\pm$ \textbf{0.54} & \textbf{98.08} $\pm$ \textbf{0.61} & \textbf{97.84} $\pm$ \textbf{0.68} & \textbf{97.94} $\pm$ \textbf{0.48} \\
\bottomrule
\end{tabular}
\label{tab: robust_dyn}
\end{table}

This empirical test shows that even if we haven't learned the most accurate dynamics model, we can expect a considerably precise safety-aware control for the system with our approach.

\subsection{Sensitivity to Expectile parameter ($\tau$)} \label{appendix: tau-sensitivity}

As discussed in Section~\ref{subsec: ood_actions}, we use a $\tau$-expectile loss to obtain the final CBVF. When $\tau = 0.5$, the objective reduces to a symmetric MSE fit, causing the learned barrier to reflect the average safety value induced by the behavior policy. This reproduces the \emph{behavior-induced} barrier and typically yields overly conservative safe sets that do not generalize well beyond the demonstrated trajectories.

To investigate this effect, we conduct a study by comparing $\tau = 0.5$ against higher expectile levels that emphasize the upper tail of the safety value distribution. Table~\ref{tab: tau_analysis} summarises the safety rates achieved on the AGV environment for different $\tau$ values.

\begin{table}[!h]
\centering
\small
\caption{Safety Rate (\%) reported for Autonomous Ground Vehicle by varying $\tau$ in the expectile regression.}
\begin{tabular}{lccccc}
\toprule
\makecell[c]{\boldsymbol{$\tau$}} 
  & \makecell[c]{\textbf{0.5} \\ \textbf{(Behavior Policy)}}
  & \makecell[c]{\textbf{0.7}} 
  & \makecell[c]{\textbf{0.8}} 
  & \makecell[c]{\textbf{0.9}} 
  & \makecell[c]{\textbf{0.99}} \\
\midrule
\textbf{Safety Rate (\%)} & 96.53 $\pm$ 0.23 & 96.87 $\pm$ 0.54 & 97.16 $\pm$ 1.01 & 97.72 $\pm$ 0.88 & \textbf{98.28} $\pm$ \textbf{0.54} \\
\bottomrule
\end{tabular}
\label{tab: tau_analysis}
\end{table}

The results reveal two key patterns.
First, the behavior-induced barrier ($\tau=0.5$) yields the lowest safety rate, consistent with its tendency to underrepresent safety-critical regions that are not frequently explored by the behavior policy. Second, as $\tau$ increases, the barrier becomes progressively more conservative in safety-critical areas and better aligned with the upper end of the demonstrated safety values, resulting in consistently higher safety rates. Figure \ref{fig:various-tau-expectile} depicts this nature of the learned barrier functions on varying the value of $\tau$.

Under idealized conditions (deterministic dynamics and no dataset disturbances ,i.e., when the best actions reliably lead to the best next states), pushing the expectile $\tau \to 1$ concentrates the estimator on the upper tail of the target distribution and, under favorable (nearly deterministic) conditions, can yield the strong empirical performance, prioritizing high-value next-state targets. However, this extreme setting is brittle to rare poor actions or stochastic disturbances that produce optimistic next states; in such cases an estimator that strictly targets the extreme upper tail may overfit those spurious outcomes.

To empirically validate this reasoning, we conducted an additional set of experiments evaluating the impact of $\tau$ across multiple Safety MuJoCo tasks using standard, public datasets from DSRL, which are not strictly deterministic. We evaluated both safety (measured via Cost) and task performance (measured via Reward) for $\tau \in \{0.5, 0.7, 0.9, 0.99\}$, as shown in Table~\ref{tab:tau_sensitivity_mujoco}.

\begin{table}[h!]
\centering
\caption{Reward and Cost across different $\tau$ expectile values for selected MuJoCo environments}
\label{tab:tau_sensitivity_mujoco}
\begin{tabular}{l c c c c c c c c}
\toprule
& \multicolumn{2}{c}{$\tau=0.5$} & \multicolumn{2}{c}{$\tau=0.7$} & \multicolumn{2}{c}{$\tau=0.9$} & \multicolumn{2}{c}{$\tau=0.99$} \\
\cmidrule(lr){2-3} \cmidrule(lr){4-5} \cmidrule(lr){6-7} \cmidrule(lr){8-9}
Environment & Reward & Cost & Reward & Cost & Reward & Cost & Reward & Cost \\
\midrule
Ant          & 565.36 & 0.0 & 639.61 & 0.0 & 791.77 & 0.0 & 734.66 & 0.02 \\
Half Cheetah & 100.39 & 0.0 & 180.87 & 0.0 & 679.73 & 0.0 & 582.58 & 0.05 \\
Hopper       &  96.94 & 0.0 & 126.54 & 0.0 & 769.69 & 0.0 & 526.72 & 0.27 \\
\bottomrule
\end{tabular}
\end{table}

As observed in Table~\ref{tab:tau_sensitivity_mujoco}, increasing $\tau$ toward $1$ (specifically $\tau = 0.99$) results in degraded safety performance, reflected by increased Cost, and in some cases reduced Reward. This trend is consistent across the reported tasks and aligns with our theoretical intuition regarding sensitivity to stochastic transitions. In contrast, $\tau = 0.9$ achieves a more favorable balance between reward maximization and safety preservation. Similar patterns were observed across other environments. Based on these findings, we selected $\tau = 0.9$ for the experiments reported in the main paper.

\subsection{Sensitivity to Dynamics Model Accuracy}
To empirically evaluate the resilience of our framework against approximation errors in the learned dynamics, we conducted an analysis using the Autonomous Ground Vehicle environment to quantify how variations in model fidelity influence safety performance. We evaluated four distinct configurations representing varying levels of accuracy: the known system dynamics, the standard learned model used in our main experiments, and two degraded models trained for only 50\% and 25\% of the standard steps, respectively. We quantified the model quality by computing the mean squared prediction error over a set of randomly sampled state-action pairs, which confirmed a monotonic reduction in predictive accuracy across the models.

\begin{table}[htbp]
\centering
\small
\caption{Effect of Dynamics Model Accuracy on Safety Rates at the time of Inference (Mean over 500 episode rollouts).}
\begin{tabular}{lccc}
\toprule
\textbf{Method} & \textbf{Safe Episodes (\%)} & \textbf{Episode Reward} & \textbf{Mean L2 Error}\\
\midrule
BC+\textbf{V-OCBF} (Known Dynamics)  & 99.52 $\pm$ 0.58 & 55.91 $\pm$ 0.39 & 0\\
BC+\textbf{V-OCBF} (Learned Dynamics)  & 98.28 $\pm$ 0.54 & 54.93 $\pm$ 0.46 & $1.48\times10^{-2}$ \\
BC+\textbf{V-OCBF} (Half-Learned Dynamics)  & 97.03 $\pm$ 0.45 & 52.44 $\pm$ 0.18 & $2.93\times10^{-2}$ \\
BC+\textbf{V-OCBF} (Quarter-Learned Dynamics)  & 96.32 $\pm$ 0.73 & 51.91 $\pm$ 0.32 & $4.46\times10^{-2}$\\
\bottomrule
\end{tabular}
\label{tab: agv_dyn_exp}
\end{table}

Subsequently, we utilized each model to synthesize the controller and evaluated the resulting empirical safety rates. As detailed in Table \ref{tab: agv_dyn_exp}, the results indicate that the safety rate experiences only a modest reduction despite the increase in model prediction error. These findings suggest that V-OCBF maintains its effectiveness in mitigating safety violations even when the underlying dynamics model is imperfect, supporting the practical applicability of the method in settings where exact system identification is challenging.

\section{Additional Experiments on Boat Navigation} \label{appendix: additional_exp}

To further assess the generalization capabilities of V-OCBF, we evaluated the method on a boat navigation task characterized by nonlinear river drift and static obstacles. This environment, while low-dimensional, presents a control challenge due to the drift dynamics which can actively push the agent into unsafe regions if not properly anticipated. Such drift dynamics are particularly difficult for offline safe RL and CBF-based methods to handle when relying on learned models or indirect constraint formulations.

The system dynamics are given by:
\begin{align*}
    (x_1)_{t+1} &= (x_1)_{t} + (a_1 + 2 - 0.5 x_2^2) \cdot \Delta t \\
    (x_2)_{t+1} &= (x_2)_{t} + a_2 \cdot \Delta t 
\end{align*}
where $\Delta t$ represents discrete time step, $a_1, a_2$ represent the velocity control action in $x_1$ and $x_2$ directions respectively, subject to the actuation limit $a_1^2 + a_2^2 \leq 1$. The term $2 - 0.5x_2^2$ introduces a state-dependent drift along the $x_1$-axis. The safety constraints are encoded as:  
\begin{align}
    \ell(x) := min ( \|x - (-0.5, 0.5)^T \| - 0.4, \|x - (-1.0, -1.2)^T \| - 0.4)
\end{align}
where $\ell(x) < 0$ indicates that the boat is inside a obstacle, thereby ensuring that the sub-level set of $\ell(x)$ defines the failure region.

\begin{table}[h!]
\centering
\small
\caption{Boat Navigation Experiment: Percentage Safe Episodes against additional baseline methods. Evaluated over 500 episodes and 5 seed values.}
\begin{tabular}{lc}
\toprule
\textbf{Method} & \textbf{Safe Episodes (\%)} \\
\midrule
BC+NCBF \citep{Alexander2020NCBF} & 88.43 $\pm$ 0.59 \\
CPQ \citep{xu2022constraints} & 54.14 $\pm$ 0.26 \\
WSAC \citep{wei2024wsac} & 79.34 $\pm$ 0.87 \\
CDT \citep{liu2023constrained} & 69.94 $\pm$ 0.42 \\
C2IQL \citep{liu2025ciql} & 70.77 $\pm$ 0.76 \\
BEAR-Lag \citep{kumar2019stabilizing} & 71.69 $\pm$ 0.16 \\
COptiDICE \citep{lee2022coptidice} & 87.88 $\pm$ 0.65 \\
FISOR \citep{1} & 85.78 $\pm$ 0.22 \\
BC+\textbf{V-OCBF} (Ours)  & \textbf{97.56} $\pm$ \textbf{0.13} \\
\bottomrule 
\end{tabular}
\label{tab: boat_results}
\end{table}

We compared V-OCBF against a broader set of baselines, including CPQ \citep{xu2022constraints}, WSAC \citep{wei2024wsac}, CDT \citep{liu2023constrained}, and C2IQL \citep{liu2025ciql}, in addition to the methods evaluated in the main text. The results, summarized in Table \ref{tab: boat_results}, demonstrate that V-OCBF consistently achieves a higher empirical safety rate compared to the baseline methods. Notably, the strongest baseline exhibits a safety rate approximately 9\% lower than our approach. These results, combined with the experiments presented in Section \ref{section: experiments}, indicate that V-OCBF provides a robust mechanism for minimizing safety violations across diverse dynamical systems, effectively handling the challenges posed by nonlinear drift and learned dynamics.

\section{Experimental Details}
\subsection{Experimental Hardware}
To keep the evaluation fair and avoid any discriminatory added advantage to any specific experiment, all experiments were conducted on a single system equipped with an 14th Gen Intel Core i9-14900KS CPU, 128GB RAM, and an NVIDIA GeForce RTX 5090 GPU for training and experiment evaluations.

\subsection{Hyperparameters for the Proposed Algorithm}
We have compiled and listed down all the hyperparameters that we used to perform our experiments and report the results. These training settings for all the  environments are detailed in the Table~\ref{tab:training_details}. For the MuJoCo environments, we use the widely accepted DSRL \cite{liu2024dsrl} dataset. We also use the \textit{Class-K} function $\kappa = \alpha \times B(\cdot)$ where $\alpha = 1$.

\begin{table}[t]
    \caption{Hyperparameters for the Proposed (V-OCBF) Network, and Learned Dynamics Model.}
    \centering
    \begin{tabular}{lc}
        \hline
        \textbf{Hyperparameter} & \textbf{Value} \\
        \hline
        Network Architecture & Multi-Layer Perceptron (MLP) \\
        Activation Function & ReLU \\
        Optimizer & Adam optimizer \\
        Learning Rate & $3\times 10^{-5}$ \\
        Discount Factor ($\gamma$) & 0.99 \\
        \hline
        \textbf{Autonomous Ground Vehicle Collision Avoidance} & \\
        \hline
        Number of Hidden Layers & 2 \\
        Hidden Layer Size & 256 neurons per layer \\
        Dataset Size & 75K \\
        Dynamics Model Hidden Layers & 3 \\
        Dynamics Model Hidden Layer Size & 64 \\
        \hline
        \textbf{Safe Velocity Hopper} & \\
        \hline
        Number of Hidden Layers & 3 \\
        Hidden Layer Size & 256 neurons per layer \\
        Dataset Size & 1.32M \\
        Dynamics Model Hidden Layers & 4 \\
        Dynamics Model Hidden Layer Size & 64 \\
        \hline
        \textbf{Safe Velocity Half-Cheetah} & \\
        \hline
        Number of Hidden Layers & 3 \\
        Hidden Layer Size & 128 neurons per layer \\
        Dataset Size & 249K \\
        Dynamics Model Hidden Layers & 4 \\
        Dynamics Model Hidden Layer Size & 64 \\
        \hline
        \textbf{Safe Velocity Ant} & \\
        \hline
        Number of Hidden Layers & 3 \\
        Hidden Layer Size & 256 neurons per layer \\
        Dataset Size & 2.09M \\
        Dynamics Model Hidden Layers & 4 \\
        Dynamics Model Hidden Layer Size & 64 \\
        \hline
        \textbf{Safe Velocity Walker2D} & \\
        \hline
        Number of Hidden Layers & 3 \\
        Hidden Layer Size & 256 neurons per layer \\
        Dataset Size & 2.12M \\
        Dynamics Model Hidden Layers & 4 \\
        Dynamics Model Hidden Layer Size & 64 \\
        \hline
        \textbf{Safe Velocity Swimmer} & \\
        \hline
        Number of Hidden Layers & 3 \\
        Hidden Layer Size & 256 neurons per layer \\
        Dataset Size & 1.68M \\
        Dynamics Model Hidden Layers & 4 \\
        Dynamics Model Hidden Layer Size & 64 \\
        \hline
    \end{tabular}
    \vspace{-1em}
    \label{tab:training_details}
\end{table}

\subsection{Hyperparameters for the Baselines} 

For all the CBF-based baselines (NCBF, iDBF, CCBF), we use the same network architectures and learned dynamics models described above to ensure a fair comparison. Hyperparameters for the safe offline RL baselines (COptiDICE, BEAR-Lag, BC, FISOR) are provided in Table~\ref{tab:baselines_nets}. 
We use the official implementations of CPQ, CDT, BEAR-Lag and COptiDICE from (OSRL) \cite{liu2024dsrl}, WSAC from \cite{wei2024wsac}, C2IQL from \cite{liu2025ciql} and FISOR from \cite{1}, and train all methods on the same DSRL datasets.

\begin{table}[!htbp]
\centering
\small
\caption{Hyperparameters for all the Baselines network. For COptiDICE and BEAR-Lag refer OSRL, \cite{liu2024dsrl}, for the official implementations. For FISOR refer \cite{1} for official implementation.}
\label{tab:baselines_nets}
\begin{tabular}{ll}
\hline
\multicolumn{2}{l}{\textbf{Autonomous Ground Vehicle (AGV)}} \\
\hline
BC - Actor & MLPActor, (128,2) \\
BEAR-Lag - Actor & SquashedGaussianMLPActor, (128,2) \\
BEAR-Lag - Critic & EnsembleDoubleQCritic, (256,2) \\
BEAR-Lag - Cost critic & MLP, (256,2) \\
COptiDICE - Actor (extraction) & MLPActor, (128,2) \\
COptiDICE - Dual / $\nu$ network & DualNet (value-like), (256,2) \\
FISOR - Actor (diffusion denoiser backbone) & DiffusionDenoiserMLP, (128,2) \\
FISOR - Feasibility classifier & FeasibilityClassifier, (256,2) \\
\hline
\multicolumn{2}{l}{\textbf{Safe Velocity Hopper}} \\
\hline
BC - Actor & MLPActor, (256,2) \\
BEAR-Lag - Actor & SquashedGaussianMLPActor, (256,2) \\
BEAR-Lag - Critic & EnsembleDoubleQCritic, (256,2) \\
BEAR-Lag - Cost critic & MLP, (256,2) \\
COptiDICE - Actor (extraction) & MLPActor, (256,2) \\
COptiDICE - Dual / $\nu$ network & DualNet (value-like), (256,2) \\
FISOR - Actor (diffusion denoiser backbone) & DiffusionDenoiserMLP, (256,2) \\
FISOR - Feasibility classifier & FeasibilityClassifier, (256,2) \\
\hline
\multicolumn{2}{l}{\textbf{Safe Velocity Half-Cheetah}} \\
\hline
BC - Actor & MLPActor, (256,2) \\
BEAR-Lag - Actor & SquashedGaussianMLPActor, (256,2) \\
BEAR-Lag - Critic & EnsembleDoubleQCritic, (256,2) \\
BEAR-Lag - Cost critic & MLP, (256,2) \\
COptiDICE - Actor (extraction) & MLPActor, (256,2) \\
COptiDICE - Dual / $\nu$ network & DualNet (value-like), (256,2) \\
FISOR - Actor (diffusion denoiser backbone) & DiffusionDenoiserMLP, (256,2) \\
FISOR - Feasibility classifier & FeasibilityClassifier, (256,2) \\
\hline
\multicolumn{2}{l}{\textbf{Safe Velocity Ant}} \\
\hline
BC - Actor & MLPActor, (256,2) \\
BEAR-Lag - Actor & SquashedGaussianMLPActor, (256,2) \\
BEAR-Lag - Critic & EnsembleDoubleQCritic, (256,2) \\
BEAR-Lag - Cost critic & MLP, (256,2) \\
COptiDICE - Actor (extraction) & MLPActor, (256,2) \\
COptiDICE - Dual / $\nu$ network & DualNet (value-like), (256,2) \\
FISOR - Actor (diffusion denoiser backbone) & DiffusionDenoiserMLP, (256,2) \\
FISOR - Feasibility classifier & FeasibilityClassifier, (256,2) \\
\hline
\multicolumn{2}{l}{\textbf{Safe Velocity Walker2D}} \\
\hline
BC - Actor & MLPActor, (256,2) \\
BEAR-Lag - Actor & SquashedGaussianMLPActor, (256,2) \\
BEAR-Lag - Critic & EnsembleDoubleQCritic, (256,2) \\
BEAR-Lag - Cost critic & MLP, (256,2) \\
COptiDICE - Actor (extraction) & MLPActor, (256,2) \\
COptiDICE - Dual / $\nu$ network & DualNet (value-like), (256,2) \\
FISOR - Actor (diffusion denoiser backbone) & DiffusionDenoiserMLP, (256,2) \\
FISOR - Feasibility classifier & FeasibilityClassifier, (256,2) \\
\hline
\multicolumn{2}{l}{\textbf{Safe Velocity Swimmer}} \\
\hline
BC - Actor & MLPActor, (256,2) \\
BEAR-Lag - Actor & SquashedGaussianMLPActor, (256,2) \\
BEAR-Lag - Critic & EnsembleDoubleQCritic, (256,2) \\
BEAR-Lag - Cost critic & MLP, (256,2) \\
COptiDICE - Actor (extraction) & MLPActor, (256,2) \\
COptiDICE - Dual / $\nu$ network & DualNet (value-like), (256,2) \\
FISOR - Actor (diffusion denoiser backbone) & DiffusionDenoiserMLP, (256,2) \\
FISOR - Feasibility classifier & FeasibilityClassifier, (256,2) \\
\hline
\end{tabular}
\end{table}

\end{document}